\newtheorem{example}{Example}
\declaretheorem[name=Definition]{definition}
\setlist[1]{itemsep=0.5em}
\setlist{topsep=0.5em}
\setlist{leftmargin=3em}
\newenvironment{pproof}
    {\par\vspace*{0.5ex}\noindent\rm {\bf Proof:} }
    {\par\vspace*{1ex}\hfill\tiny$\Box$}
\DeclareFontFamily{U}{matha}{\hyphenchar\font45}
\DeclareFontShape{U}{matha}{m}{n}{
      <5> <6> <7> <8> <9> <10> gen * matha
      <10.95> matha10 <12> <14.4> <17.28> <20.74> <24.88> matha12
      }{}
\DeclareSymbolFont{matha}{U}{matha}{m}{n}
\DeclareMathSymbol{\odiv}         {2}{matha}{"63}
\newcommand*{\bigCup}{\mathop{\mathpalette\big@Cup\relax}\slimits@}
\newcommand*{\big@Cup}[2]{%
   \setbox\z@=\hbox{\m@th$#1\Cup$}%
   \setbox\z@=\vtop{\vbox{\kern.2\ht\z@\copy\z@}\kern.1\ht\z@}%
   \setbox\tw@=\hbox{\m@th$#1\bigcup$}%
   \vcenter{\hbox{\resizebox{!}{1.4\ht\tw@}{\box\z@}}}%
}
\newcommand{\twiddle}{\mathrel|\joinrel\sim} 
\newcommand{\mods}[1]{[\![#1]\!]}
\newcommand{\bel}[1]{[#1]}
\newcommand{\cbel}[1]{[#1]_{>}}
\newcommand{\contract}{\div} 
\newcommand{\STQ}{\oplus_{\mathrm{STQ}}}
\newcommand{\CConR}[1]{$(\mathrm{C}{#1}^{\scriptscriptstyle \contract}_{\scriptscriptstyle \preccurlyeq})$}
\newcommand{\CConRn}[1]{$(\mathrm{C}{#1}^{\scriptscriptstyle \odiv}_{\scriptscriptstyle \preccurlyeq})$}
\newcommand{\CConS}[1]{$(\mathrm{C}{#1}_{\scriptscriptstyle\mathrm{b}}^{\scriptscriptstyle \contract})$}
\newcommand{\KCon}[1]{$(\mathrm{K}{#1}^{\scriptscriptstyle \contract})$}
\newcommand{\KConP}[1]{$(\mathrm{K}{#1}^{\scriptscriptstyle \odiv})$}
\newcommand{\HI}{$(\mathrm{HI})$}
\newcommand{\MultiConInter}{$(\mathrm{Int}_{\scriptscriptstyle\mathrm{b}}^{\odiv})$}
\newcommand{\MultiConAggregPrec}{$(\mathrm{Agg}^{\odiv}_{\scriptscriptstyle \preccurlyeq})$}
\newcommand{\LBO}{$(\mathrm{LB}_{\scriptscriptstyle\mathrm{b}}^{\scriptscriptstyle \oplus})$}
\newcommand{\LBConBel}{$(\mathrm{LB}_{\scriptscriptstyle\mathrm{b}}^{\odiv})$}
\newcommand{\UBO}{$(\mathrm{UB}_{\scriptscriptstyle\mathrm{b}}^{\scriptscriptstyle \oplus})$}
\newcommand{\UBOConBel}{$(\mathrm{UB}_{\scriptscriptstyle\mathrm{b}}^{\odiv})$}
\newcommand{\SB}{$(\mathrm{PAR}_{\scriptscriptstyle \min})$}
\newcommand{\SBConBel}{$(\mathrm{PAR}_{\scriptscriptstyle\mathrm{b}}^{\odiv})$}
\newcommand{\PPAR}{$(\mathrm{PAR}^{\scriptscriptstyle \oplus}_{\scriptscriptstyle \preccurlyeq})$}
\newcommand{\PPARMin }{$(\mathrm{PAR}^{\scriptscriptstyle \oplus}_{\scriptscriptstyle \min})$}
\newcommand{\FacPref}{$(\mathrm{F}^{\oplus}_{\preccurlyeq})$}
\newcommand{\SPU}{$(\mathrm{SPU}^{\oplus}_{\preccurlyeq})$}
\newcommand{\SPUPlus}{$(\mathrm{SPU+}^{\oplus}_{\preccurlyeq})$}
\newcommand{\WPU}{$(\mathrm{WPU}^{\oplus}_{\preccurlyeq})$}
\newcommand{\WPUPlus}{$(\mathrm{WPU+}^{\oplus}_{\preccurlyeq})$}
\newcommand{\FacMin}{$(\mathrm{F}^{\oplus}_{\scriptscriptstyle \min})$}
\newcommand{\FacConBel}{$(\mathrm{F}_{\scriptscriptstyle\mathrm{b}}^{\odiv})$}
\newcommand{\Cn}{\mathrm{Cn}}
\newcommand{\CRat}{\mathrm{Cl_{rat}}}
\begin{document}

\title{Parallel Belief Contraction via\\ Order Aggregation}

\author{Jake Chandler$^1$ \and Richard Booth$^2$
\affiliations
$^1$La Trobe University\\
$^2$Cardiff University\\
\emails
Jake.Chandler@cantab.net,
BoothR2@cardiff.ac.uk
}


\date{}

\maketitle

\begin{abstract}
The standard ``serial'' (aka ``singleton'') model of belief contraction models the manner in which an agent's corpus of beliefs responds to the removal of a single item of information.  One salient extension of this model introduces the idea of ``parallel'' (aka ``package'' or ``multiple'') change, in which an entire set of items of information are  simultaneously  removed. Existing research on the latter has largely focussed on single-step parallel contraction: understanding the behaviour of beliefs after a single parallel contraction. It has also focussed on generalisations to the parallel case  of serial contraction operations whose characteristic properties are extremely weak. Here we consider how to extend serial contraction operations that obey stronger properties. Potentially more importantly, we also consider the iterated case: the behaviour of beliefs after a sequence of parallel  contractions. We propose a general method for extending serial iterated belief change operators to handle parallel change based on an $n$-ary generalisation of  Booth \& Chandler's TeamQueue binary order aggregators.
\end{abstract}

\section{Introduction}

The field of belief revision studies the formal rationality constraints that govern the impact of the removal or addition of particular beliefs on an agent's broader world view. The incorporation of new beliefs is modelled by an operation of ``revision'', while the removal of beliefs is modelled by an operation of ``contraction''.

Initial work in this area was restricted to studying the repercussions of (i) a {\em single} episode of change (single-step change), involving the removal or addition of (ii) a {\em single} item of information (serial change). In this narrow context, the AGM postulates presented in \cite{alchourron1985logic} are widely accepted to provide adequate constraints on both contraction and revision, although belief change operations whose characteristic axioms fall considerably short of full AGM have also been studied extensively, including
 serial partial meet contraction \cite{alchourron1985logic}, serial partial meet base contraction \cite{Hansson1992aa} and serial kernel contraction \cite{KCH}.

The focus was later broadened. Two new aspects were considered: (iii) the behaviour of beliefs under {\em successive} changes (iterated change), and (iv) their response to the simultaneous removal or addition of {\em multiple} items of information (parallel change). With the exceptions of \cite{DelgrandeJames2012PbrR}, which focuses on revision, and \cite{SpohnPC},  which tackles contraction, these generalisations have largely been carried out separately, with research focusing either on iterated serial change or on single-step parallel change.

Work on iterated serial change notably saw the introduction of the postulates of Darwiche \& Pearl \cite{darwiche1997logic} for iterated serial revision and the postulates of Chopra {\em et al} \cite{chopra2008iterated} for iterated serial contraction, as well as various strengthenings thereof.

Regarding single-step parallel change, single-step parallel revision has been plausibly claimed to reduce to single-step serial revision (see \cite{DelgrandeJames2012PbrR}). Work on single-step parallel change has therefore focussed on the less obvious case of contraction. For reasons that are not entirely clear, however, the emphasis here has been on extending to the parallel case serial contraction operations that do {\em not} satisfy full AGM. We find proposals for partial meet parallel contraction (see \cite{https://doi.org/10.1111/j.1755-2567.1989.tb00725.x}, \cite{FurHanSMC} and \cite{DBLP:journals/jphil/ReisF12}), with an interesting special case studied in \cite{DBLP:journals/jphil/FermeR12}, \cite{DBLP:journals/rsl/FermeR13} and \cite{DBLP:journals/amai/ReisPF16}; there also exists an extension to the parallel case of serial kernel contraction (see \cite{DBLP:journals/sLogica/FermeSS03}). In contrast,  little attention has been paid to extending fully AGM-compliant operations.

This article aims to fill a substantial gap by extending fully AGM-compliant serial contraction not only to the single-step parallel case but to the iterated parallel case as well. It achieves this goal by employing a generalisation to the $n$-ary case of a binary order aggregation method--``TeamQueue'' aggregation--proposed in another context by Booth \& Chandler \cite{DBLP:journals/ai/BoothC19}. An axiomatic characterisation of this generalisation is provided, which will be of interest independently of the question of parallel change.

The plan of the paper is as follows. In Section \ref{sec:PrincBelCh}, we recapitulate basic notions of serial belief contraction, both single-step and iterated. Section \ref{sec:PrincParaBelCh} turns to the parallel case, restricting attention to single-step parallel contraction due to the absence of relevant work on the iterated case. There, we show that a particularly plausible approach to this issue, the ``intersective'' approach, validates a number of plausible principles due to Furhmann \& Hansson's, as well as two further ones that we introduce here. This is an important result, since, collectively, these principles generalise to the parallel case the AGM postulates for serial contraction. In Section \ref{sec:TQAgg}, we outline and discuss the $n$-ary generalisation of TeamQueue aggregation, covering its construction, semantic and syntactic characterisations, noteworthy properties, and connection to Lehmann \& Magidor's concept of rational closure. Section \ref{sec:TQPara} then puts $n$-ary TeamQueue aggregation to work in the construction of iterated parallel contraction operators, generalising the intersective approach to the iterated case. Section \ref{sec:Concl} concludes with open questions and suggestions for future research, including the potential broader applications of TeamQueue aggregation. In order to improve readability, proofs of propositions and theorems have been relegated to the \hyperref[s:Appendix]{appendix}.

\section{Serial belief contraction} 
\label{sec:PrincBelCh}

In the standard model of belief change, the beliefs of an agent are represented by a {\em belief state} $\Psi$. The latter determines a {\em belief set} $\bel{\Psi}$, a deductively closed set of sentences, drawn from a  propositional, truth-functional, finitely-generated language $L$. The set of classical logical consequences of $S\subseteq L$ will be denoted by $\mathrm{Cn}(S)$. When $S$ is simply the singleton set $\{C\}$, we write $\Cn(C)$. The set of $2^{n}$ propositional worlds or valuations will be denoted by $W$, and the set of models of a given sentence $A$ by $\mods{A}$. 

The core of this model includes two ``serial'' belief change operations, revision $\ast$ and contraction $\contract$, both mapping a pair consisting of a state and a single input sentence onto a state. Revision models the  incorporation of the input into the agent's beliefs, while contraction models its removal. While earlier discussions of the model focussed on single-step serial belief change, i.e.~the change brought about by a single episode of revision or contraction by a single sentence, attention shifted to iterated serial change, involving a succession of episodes of serial revision or contraction.

\subsection{Single-step serial change}
\label{ss:SSSC}

In the case of single-step serial belief change, it is widely accepted that the AGM postulates, introduced in \cite{alchourron1985logic}, provide an adequately strong set of rationality constraints. In relation to contraction, these are:
\begin{tabbing}
    \=BLAHBLAII\=\kill

\> \KCon{1}  \> $\Cn([\Psi\contract A])\subseteq [\Psi\contract A]$\\[0.1cm]

\> \KCon{2}  \> $[\Psi\contract  A]\subseteq[\Psi]$\\[0.1cm]

\> \KCon{3} \> If $A\notin [\Psi]$, then $[\Psi\contract A]=[\Psi]$\\[0.1cm]

\> \KCon{4} \> If $A\notin \Cn(\varnothing)$, then $A\notin [\Psi\contract A]$\\[0.1cm]

\> \KCon{5}  \>  If $A\in[\Psi]$, then $[\Psi]\subseteq\Cn([\Psi\contract A]\cup\{ A\})$\\[0.1cm]

\> \KCon{6}  \> If $\Cn(A)=\Cn(B)$, then $[\Psi\contract A]=[\Psi\contract B]$\\[0.1cm]

\> \KCon{7} \> $[\Psi\contract A]\cap[\Psi\contract B]\subseteq [\Psi\contract A\wedge B]$\\[0.1cm]

\> \KCon{8}  \> If $A\notin [\Psi\contract A\wedge B]$, then $[\Psi\contract A\wedge B]\subseteq$\\
\> \> $[\Psi\contract A]$\\[-0.25em]
\end{tabbing} 
\vspace{-1em}

\noindent The first six principles are known as the ``basic'' AGM postulates. The last two are known as the ``supplementary'' ones. Analogous principles regulate single-step serial revision. We call a serial contraction operator that satisfies \KCon{1}--\KCon{8} an AGM contraction operator.

A principle known as the Harper identity \cite{harper1976rational} allows us to define single-step serial contraction in terms of single-step serial revision.

\begin{tabbing}
    \=BLAHBLAII\=\kill
\>\HI  \> $\bel{\Psi\contract A}= \bel{\Psi} \cap \bel{\Psi\ast \neg A}$ \\[-0.25em]
\end{tabbing}
\vspace{-1em}

\noindent The motivation for this principle is straightforward. The idea is that, in contracting by $A$, we are opening our minds to the possibility that $A$ is false. So we must retract anything that would be no longer endorsed, had one come to believe that this possibility is an actuality. This, however, is the only modification to our prior beliefs that we should make, as we should retract nothing further and introduce nothing new.

A representation theorem connects contraction operators compliant with the full set of AGM postulates to total preorders (TPOs), i.e. reflexive, complete and transitive binary relations, over sets of propositional worlds. More specifically each $\Psi$ can be associated with  a  TPO  $\preccurlyeq_\Psi$ over $W$, such that   $\min(\preccurlyeq_{\Psi\contract A},W) = \min(\preccurlyeq_\Psi, W)\cup\min(\preccurlyeq_\Psi, \mods{\neg A})$ (see \cite{10.1016/j.ijar.2016.06.010}).

The information conveyed by the TPOs associated with belief states can be equivalently captured by conditional belief sets  $\cbel{\Psi}:=\{A>B\mid B\in \bel{\Psi\ast A}\}$ or again nonmonotonic consequence relations  $\twiddle_\Psi=\{\langle A, B\rangle\mid A > B\in \cbel{\Psi} \}$.  The AGM postulates ensure that  such belief sets or consequence relations are ``rational'' (in the sense of \cite{lehmann1992does}) and ``consistency preserving'' (see \cite{10.1007/BFb0018421}).

These results mean that the various principles that we shall be discussing can typically be presented in several equivalent alternative formats, where we will use subscripts to distinguish between these, with the non-subscripted version of the name generically referring to the principle regardless of presentation. The names of principles framed in terms of TPOs will be subscripted with $\preccurlyeq$. It will sometimes be useful to present principles in terms of minimal sets, denoting the $\preccurlyeq$-minimal subset of $S\subseteq W$, that is $\{x\in S\mid \forall y\in S, x\preccurlyeq y\}$, by $\min(\preccurlyeq, S)$. We will use the subscript $\min$ to indicate presentation in this format. Similarly, a principle cast in terms of conditional belief sets will be subscripted with $>$. Where required for disambiguation, the names of principles presented in terms of belief sets will include the subscript $\mathrm{b}$. Superscripts will be used to indicate the particular operation, such as $\ast$ or $\contract$, whose behaviour a given postulate constrains.

\subsection{Iterated serial contraction}
\label{ss:ItSer}

 When it comes to sequences of serial contraction, the basic postulates of Chopra {\em et al } \cite{chopra2008iterated} remain largely uncontroversial. While these have been supplemented in various ways, few additions have been uncontested and we shall not be discussing them here.  In the case of contraction, supplementary postulates have yielded moderate, priority (see \cite{nayak2007iterated}), and restrained (introduced in \cite{DBLP:journals/jphil/ChandlerB23}) contraction operators. But these operations are alike in identifying, for the purposes of belief change, belief states with TPOs, a view criticised in \cite{DBLP:journals/jphil/BoothC17}.

The postulates of Chopra {\em et al } can be presented either ``syntactically'' in terms of belief sets or ``semantically'' in terms of TPOs. Syntactically, they are given by: 
\begin{tabbing}
    \=BLAHBLAII\=\kill

\> \CConS{1}  \>  If $\neg A \in \mbox{Cn}(B)$ then $\bel{(\Psi \contract A) \ast B} = \bel{\Psi \ast B}$ \\[0.1cm]

\> \CConS{2} \> If $A \in \mbox{Cn}(B)$ then $\bel{(\Psi \contract A) \ast B} = \bel{\Psi \ast B}$ \\[0.1cm]

\> \CConS{3}   \> If $\neg A \in \bel{\Psi \ast B}$ then $\neg A \in \bel{(\Psi \contract A) \ast B}$ \\[0.1cm]

\> \CConS{4}  \> If $A \not\in \bel{\Psi \ast B}$ then $A \not\in \bel{(\Psi \contract A) \ast B}$\\[-0.25em]
\end{tabbing} 
\vspace{-1em}
\noindent and semantically by:
\begin{tabbing}
    \=BLAHBLAII\=\kill

\> \CConR{1}  \>  If $x,y \in \mods{\neg A}$ then $x \preccurlyeq_{\Psi \contract A} y$ iff  $x \preccurlyeq_\Psi y$ \\[0.1cm]

\> \CConR{2}  \> If $x,y \in \mods{A}$ then $x \preccurlyeq_{\Psi \contract A} y$ iff  $x \preccurlyeq_\Psi y$\\[0.1cm]

\> \CConR{3}   \> If $x \in \mods{\neg A}$, $y \in \mods{A}$ and $x \prec_\Psi y$ then\\
\> \>  $x \prec_{\Psi \contract A} y$ \\[0.1cm]

\> \CConR{4} \> If $x \in \mods{\neg A}$, $y \in \mods{A}$ and $x \preccurlyeq_\Psi y$ then\\
\> \>  $x \preccurlyeq_{\Psi \contract A} y$\\[-0.25em]
\end{tabbing} 
\vspace{-1em}

\noindent The question of how to extend the Harper Identity to the iterated case was considered in  \cite{DBLP:journals/ai/BoothC19}. We briefly recapitulate this contribution here, since it is relevant to what follows. In that paper, it was first noted that the naive suggestion of simply recasting \HI~in terms of conditional belief sets.

\begin{tabbing}
    \=BLAHBLAII\=\kill

\> (NiHI$^{\scriptscriptstyle \contract}_{\scriptscriptstyle >}$) \> $\cbel{\Psi \contract A}=\cbel{\Psi} \cap \cbel{\Psi \ast \neg A}$\\[-0.25em]
\end{tabbing} 
\vspace{-1em}

\noindent (equivalently, in terms of non-conditional belief sets: $\bel{(\Psi \contract A) \ast B}=\bel{\Psi \ast B} \cap \bel{(\Psi \ast \neg A) \ast B}$) is a non-starter: on pains of placing undue restrictions on the space of permissible conditional belief sets, the left-to-right inclusion in the naive suggestion was shown to be jointly inconsistent with several of the AGM postulates for serial revision and contraction.

A proposal was then made, involving a binary TPO aggregation function $\oplus$, mapping pairs of input TPOs onto a single aggregate output TPO:

\begin{tabbing}
    \=BLAHBLAII\=\kill

\> (iHI$^{\scriptscriptstyle \contract}_{\scriptscriptstyle \preccurlyeq}$) \> $\preccurlyeq_{\Psi\contract A} = \oplus\langle\preccurlyeq_{\Psi}, \preccurlyeq_{\Psi\ast \neg A}\rangle$\\[-0.25em]
\end{tabbing} 
\vspace{-1em}

\noindent A family of binary aggregators, the ``TeamQueue'' (TQ) family, was argued to be appropriate for this job, with one specific member of this family, the ``Synchronous TeamQueue'' function $\STQ$, being singled out as particularly promising. It was shown that, when $\oplus$ is taken to be a TQ aggregation function, (iHI$^{\scriptscriptstyle \contract}_{\scriptscriptstyle \preccurlyeq}$) allows for the derivation of several important principles, including \HI, which comes out as a special case of (iHI$^{\scriptscriptstyle \contract}_{\scriptscriptstyle \preccurlyeq}$), as well as  \CConS{1} to  \CConS{4} above, which are derivable from the corresponding Darwiche-Pearl postulates for iterated serial revision. 

Taking $\oplus$ to specifically correspond to $\STQ$ was argued to yield further desirable theoretical results. In particular, it delivers an appealing syntactic version of (iHI$^{\scriptscriptstyle \contract}_{\scriptscriptstyle \preccurlyeq}$) based on the rational closure $\CRat(\Gamma)$ of a set of conditionals $\Gamma$, or equivalently of a non-monotonic consequence relation (see  \cite{lehmann1992does}):

\begin{tabbing}
    \=BLAHBLAII\=\kill

\> (iHI$^{\scriptscriptstyle \contract}_{\scriptscriptstyle >}$) \> $\cbel{\Psi \contract A}=\CRat(\cbel{\Psi} \cap \cbel{\Psi \ast \neg A})$\\[-0.25em]
\end{tabbing} 
\vspace{-1em}

\noindent In other words, the conditional belief set obtained after contraction by $A$ corresponds to the rational closure of the intersection of the prior conditional belief set with the conditional belief set obtained after revision by $\neg A$. This principle is attractive, due to the fact that $\CRat(\Gamma)$ has been argued to correspond to the most conservative way of extending a consequence relation (equivalently: conditional belief set) to a {\em rational} consequence relation (equivalently: conditional belief set). (iHI), therefore,  parsimoniously fixes the issue noted above in relation to (NiHI), which sometimes resulted in a non-rational conditional belief set.  We return to TeamQueue aggregation below, in Section \ref{sec:TQAgg}.

\section{Background on parallel belief contraction} 
\label{sec:PrincParaBelCh}

While the ``serial'' model takes {\em single} sentences as inputs for contraction or revision, it has been suggested that this imposes unrealistic limitations on the kind of change that can be modelled. The problem of so-called  ``parallel'' (aka ``package'' or ``multiple'') contraction is to compute the  impact, on an agent's beliefs, of the simultaneous removal  of a non-empty finite indexed {\em set} $S= \{A_1,\ldots, A_n\}$ of sentences in $L$  (with set of indices $I=\{1,\ldots, n\}$). We shall denote parallel contraction  by $\odiv$ and assume that it subsumes $\contract$ as the special case in which the input is a singleton set, setting $\bel{\Psi \odiv \{A\}} = \bel{\Psi \contract A}$. We use $\bigwedge S$ to denote $A_1\wedge \ldots\wedge A_n$ and $\neg S$ to denote $\{\neg A\mid A\in S\}$. 

Considerations of parsimony motivate defining parallel contraction in terms of serial contraction. Regarding the single-step case, a number of the more straightforward proposals have been noted to be problematic.

First, we have the identification of parallel contraction by a set $S$ with a sequence of serial contractions by the members of $S$. This  runs into problems due to a failure of commutativity (see  \cite{HANSSONS.O1993RtLI}): different orders of operations can yield different outcomes, and no principled way seems to exist to privilege one order over another.

Second, there is the identification of parallel contraction by $S$ with a single serial contraction by some truth functional combination of the members of $S$ (such as the disjunction $\bigvee S$ of the members of $S$, so that, for example,  $\Psi\odiv \{A, B\}=\Psi\contract A\vee B$). Certainly, due to the logical  closure of belief sets, removing $A\vee B$ would involve removing both $A$ and $B$, as contraction by $\{A, B\}$ requires. However, as pointed out, for instance, in  \cite{FurHanSMC}, this would be too drastic an operation: clearly, one can simultaneously retract one's commitments both to $A$ and to $B$  without thereby retracting one's commitment to $A\vee B$. From this observation, it follows that we cannot generally identify the belief sets $\bel{\Psi\odiv\{A, B\}}$ and $\bel{\Psi\contract A\vee B}$ and hence a fortiori, that we cannot generally identify the belief {\em states} $\Psi\odiv \{A, B\}$ and $\Psi\contract A\vee B$. Furthermore, more generally, as Fuhrmann notes in \cite{Fuhrmann1996-FUHAEO},  there is no truth-functional combination of $A$ and $B$ that would do the job either. 

A more promising solution is the ``intersective'' approach, which identifies the belief set obtained by parallel contraction by $S$ with the intersection of the belief sets obtained by serial contraction by the members of $S$. This proposal has been endorsed by Spohn  \cite{SpohnPC}, as it follows from his more general approach to iterated parallel contraction.
\begin{tabbing}
    \=BLAHBLAII\=\kill

\> \MultiConInter \> $[\Psi\odiv\{A_1,\ldots, A_n\}]=\bigcap_{1\leq i\leq n}[\Psi\contract A_i]$\\[-0.25em]
\end{tabbing} 
\vspace{-1em}

\noindent This suggestion owes its plausibility to the same kind of considerations as the formally related Harper Identity did (see Subsection \ref{ss:SSSC}). In contracting by a set of sentences, the thought goes, we ought not believe anything that any of the individual contractions would preclude us from believing. But this  is the only modification to our prior beliefs that we should make. In particular, we should retract nothing further and introduce nothing new.

Beyond this rationale, we note that \MultiConInter~has several further attractive properties. First of all, if one assumes the basic AGM postulates for serial contraction, i.e. \KCon{1} to \KCon{6}, it yields a parallel contraction operator that satisfies plausible generalisations of  these:

\begin{restatable}{thm}{RedtoAGMPOneSix}
\label{thm:RedtoAGMPOneSix}
Let $\odiv$ be a parallel contraction operator such that, for some serial contraction operator $\contract$ that  satisfies \KCon{1}-\KCon{6}, $\odiv$ and  $\contract$ jointly satisfy  \MultiConInter. Then $\odiv$ satisfies:
\begin{tabbing}
    \=BLAHBLAII\=\kill

\> \KConP{1} \> $\Cn([\Psi\odiv  S ])\subseteq [\Psi\odiv  S ]$\\[0.1cm]

\> \KConP{2}  \> $[\Psi\odiv   S ]\subseteq[\Psi]$\\[0.1cm]

\> \KConP{3}  \> If $ S \cap[\Psi]=\varnothing$, then $[\Psi\odiv  S ]=[\Psi]$\\[0.1cm]

\> \KConP{4} \> $\forall A\in S$, if $  A\notin \Cn(\varnothing)$, then $ A\notin [\Psi\odiv S]$\\[0.1cm]

\> \KConP{5} \>  If $ S \subseteq[\Psi]$, then $[\Psi]\subseteq\Cn([\Psi\odiv  S ]\cup\{  S \})$\\[0.1cm]

\> \KConP{6} \> If, $\forall A_1\in S _1$, $\exists A_2\in S _2$ s.t. $\Cn(A_1)=\Cn(A_2)$,\\
\> \>  and vice versa, then $[\Psi\odiv  S _1]=[\Psi\odiv  S _2]$\\[-0.25em]

\end{tabbing} 
\vspace{-1em}
\end{restatable}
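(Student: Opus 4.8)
The plan is to dispatch all six properties by first unfolding the left-hand side via \MultiConInter, writing $\bel{\Psi \odiv S} = \bigcap_{i=1}^{n} \bel{\Psi \contract A_i}$ throughout, and then pushing each property through the intersection using the corresponding serial postulate. For \KConP{1} I would observe that each $\bel{\Psi \contract A_i}$ is deductively closed by \KCon{1}, and that an intersection of deductively closed sets is again closed (if $B \in \Cn(\bigcap_i \bel{\Psi \contract A_i})$ then $B \in \Cn(\bel{\Psi \contract A_i}) = \bel{\Psi \contract A_i}$ for every $i$). For \KConP{2}, the intersection is contained in any one of its (non-empty family of) terms, and \KCon{2} gives $\bel{\Psi \contract A_i} \subseteq \bel{\Psi}$. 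For \KConP{3}, the hypothesis $S \cap \bel{\Psi} = \varnothing$ forces $A_i \notin \bel{\Psi}$ for every $i$, so \KCon{3} yields $\bel{\Psi \contract A_i} = \bel{\Psi}$ for each $i$ and the intersection collapses to $\bel{\Psi}$. For \KConP{4}, given $A_j \in S$ with $A_j \notin \Cn(\varnothing)$, postulate \KCon{4} gives $A_j \notin \bel{\Psi \contract A_j}$, and since $\bel{\Psi \odiv S} \subseteq \bel{\Psi \contract A_j}$ we conclude $A_j \notin \bel{\Psi \odiv S}$.

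The only genuinely non-routine step is recovery, \KConP{5} (reading the $\{S\}$ of the statement as the premise set $S$, so that the target is $\bel{\Psi} \subseteq \Cn(\bel{\Psi \odiv S} \cup S)$). The obstacle is that serial recovery only delivers, for each $i$ separately, a conditional $A_i \to B$ lying in $\bel{\Psi \contract A_i}$, and such a conditional need not belong to the other terms of the intersection. The key idea will be a logical weakening. Assuming $S \subseteq \bel{\Psi}$ and fixing $B \in \bel{\Psi}$, each $A_i \in \bel{\Psi}$, so \KCon{5} gives $B \in \Cn(\bel{\Psi \contract A_i} \cup \{A_i\})$; by the deduction theorem together with \KCon{1} this yields $A_i \to B \in \bel{\Psi \contract A_i}$. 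Since $\bigwedge S \models A_i$, the sentence $A_i \to B$ classically entails $\bigwedge S \to B$, and closure places $\bigwedge S \to B$ in $\bel{\Psi \contract A_i}$. Crucially, this now holds uniformly in $i$, so $\bigwedge S \to B \in \bigcap_i \bel{\Psi \contract A_i} = \bel{\Psi \odiv S}$. Finally, $\bigwedge S \to B$ together with the premises of $S$ (which entail $\bigwedge S$) yields $B$ by modus ponens, establishing the required inclusion.

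For extensionality, \KConP{6}, the plan is to show that the two intersections range over the same family of serially contracted belief sets. Under the hypothesis, for every $A_1 \in S_1$ there is $A_2 \in S_2$ with $\Cn(A_1) = \Cn(A_2)$, so \KCon{6} gives $\bel{\Psi \contract A_1} = \bel{\Psi \contract A_2}$; hence $\{\bel{\Psi \contract A} \mid A \in S_1\} \subseteq \{\bel{\Psi \contract A} \mid A \in S_2\}$, and the symmetric argument gives the reverse inclusion, so the two families coincide and their intersections are identical. The main obstacle is thus confined to the weakening manoeuvre used for recovery; the remaining five cases follow immediately from the set-theoretic behaviour of intersection combined with the matching serial postulate.
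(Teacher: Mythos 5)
Your proposal is correct and takes essentially the same route as the paper's own proof: every postulate is pushed through the intersection supplied by \MultiConInter, and for \KConP{5} you use exactly the paper's key weakening manoeuvre, obtaining $A_i \rightarrow B \in \bel{\Psi \contract A_i}$ via \KCon{5} and the deduction theorem and then weakening to $\bigwedge S \rightarrow B$, which lies uniformly in every term of the intersection. The only cosmetic difference is that the paper fixes a single sentence $B$ with $\bel{\Psi} = \Cn(B)$ (invoking the finiteness of $L$), whereas you run the argument for an arbitrary $B \in \bel{\Psi}$, which works equally well and even dispenses with the finiteness appeal.
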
 

\noindent These postulates were all endorsed by Fuhrmann \& Hansson (see \cite{FurHanSMC}) as plausible generalisations of their serial counterparts, with the exception of \KConP{4}. Their own generalisation of  \KCon{4} is indeed slightly weaker, although the difference only pertains to the handling of certain limiting cases.

It is worth noting that \KConP{6} differs from the following alternative generalisation of \KCon{6}: If $\Cn(S_1) = \Cn(S_2)$, then $[\Psi\odiv  S _1]=[\Psi\odiv  S _2]$. Indeed, as Fuhrmann \& Hansson  \cite[pp.~52]{FurHanSMC} point out, the latter is actually  {\em inconsistent} with the conjunction of \KConP{3} and \KConP{4}. To see why, where $p$ and $q$ are atomic sentences, let $\bel{\Psi} = \Cn(p)$. Then we have $\bel{\Psi\odiv\{p\wedge q\}} = \bel{\Psi}$, by \KConP{3}, since  $p\wedge q\notin \bel{\Psi}$, but  $\bel{\Psi\odiv\{p\wedge q, p\}} \neq \bel{\Psi}$, by \KConP{4}, even though $\Cn(\{p\wedge q\}) = \Cn(\{p\wedge q, p\})$.

Fuhrmann \& Hansson propose a pair of postulates constraining the relation between contractions by sets standing in a subset relation to one another. The first of these is satisfied by the intersective approach, while the second is not, even assuming \KCon{1}-\KCon{8}: 

\begin{restatable}{prop}{RedtoAGMPSevEightFH}
\label{prop:RedtoAGMPSevEightFH}
(a) Let $\odiv$ be a parallel contraction operator such that, for some serial contraction operator $\contract$, $\odiv$ and  $\contract$ jointly satisfy  \MultiConInter. Then $\odiv$ satisfies:

\begin{center}
If $S _1\cap  S _2\neq \varnothing$, then  $[\Psi\odiv  S _1]\cap[\Psi\odiv  S _2]\subseteq [\Psi\odiv  (S _1\cap  S _2)]$
\end{center}

\noindent (b) There exist a parallel contraction operator $\odiv$ and a serial contraction operator $\contract$ that jointly satisfy  \MultiConInter~but are such that the following principle fails:

\begin{center}
If $ S _1\cap[\Psi \odiv S _2]=\varnothing$, then $[\Psi \odiv  S _2]\subseteq [\Psi \odiv  (S _1\cup  S _2)]$
\end{center}

\end{restatable}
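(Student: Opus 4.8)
The plan is to treat (a) by direct set-theoretic manipulation of the intersective identity $\MultiConInter$, and (b) by exhibiting a concrete counterexample built from a single total preorder, so that the underlying serial $\contract$ is fully AGM-compliant (\KCon{1}--\KCon{8}).

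For (a), I would first use $\MultiConInter$ to collapse the left-hand side into a single intersection:
\[
\bel{\Psi\odiv S_1}\cap\bel{\Psi\odiv S_2}=\Big(\bigcap_{A\in S_1}\bel{\Psi\contract A}\Big)\cap\Big(\bigcap_{A\in S_2}\bel{\Psi\contract A}\Big)=\bigcap_{A\in S_1\cup S_2}\bel{\Psi\contract A}.
\]
The key --- and essentially only --- observation is that intersecting over a larger index set yields a smaller set, so from $S_1\cap S_2\subseteq S_1\cup S_2$ we obtain $\bigcap_{A\in S_1\cup S_2}\bel{\Psi\contract A}\subseteq\bigcap_{A\in S_1\cap S_2}\bel{\Psi\contract A}=\bel{\Psi\odiv(S_1\cap S_2)}$, which is exactly the claim. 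The hypothesis $S_1\cap S_2\neq\varnothing$ is needed only to ensure the right-hand term is a well-defined (non-empty-input) parallel contraction.

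For (b), I would first simplify the target. Since $\MultiConInter$ gives $\bel{\Psi\odiv(S_1\cup S_2)}=\bel{\Psi\odiv S_1}\cap\bel{\Psi\odiv S_2}$, the consequent $\bel{\Psi\odiv S_2}\subseteq\bel{\Psi\odiv(S_1\cup S_2)}$ is equivalent to $\bel{\Psi\odiv S_2}\subseteq\bel{\Psi\odiv S_1}$. Hence it suffices to find $\Psi$ and singletons $S_1=\{B\}$, $S_2=\{A\}$ with $B\notin\bel{\Psi\contract A}$ (so the antecedent $S_1\cap\bel{\Psi\odiv S_2}=\varnothing$ holds) yet $\bel{\Psi\contract A}\not\subseteq\bel{\Psi\contract B}$ (so the consequent fails). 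Concretely, I would work over a language on two atoms $p,q$ and fix $\Psi$ by the strict linear order $\mods{p\wedge q}\prec\mods{p\wedge\neg q}\prec\mods{\neg p\wedge q}\prec\mods{\neg p\wedge\neg q}$, taking $\contract$ to be the AGM contraction this TPO induces via the representation result of Subsection~\ref{ss:SSSC} (so \KCon{1}--\KCon{8} hold) and $\odiv$ its intersective extension. Using $\min(\preccurlyeq_{\Psi\contract C},W)=\min(\preccurlyeq_\Psi,W)\cup\min(\preccurlyeq_\Psi,\mods{\neg C})$, a short computation gives $\bel{\Psi\contract p}=\Cn(q)$ (dropping $p$ retains $q$) and $\bel{\Psi\contract(p\wedge q)}=\Cn(p)$ (the cheapest way to give up $p\wedge q$ sacrifices $q$, not $p$). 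Setting $A=p$ and $B=p\wedge q$, the antecedent holds because $p\wedge q\notin\Cn(q)$, while the consequent fails because $q\in\Cn(q)=\bel{\Psi\odiv S_2}$ but $q\notin\Cn(p)=\bel{\Psi\odiv S_1}$.

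The substantive content lies entirely in (b): the obstacle is selecting a TPO for which serial contraction by $A$ and by the logically stronger $B$ diverge in the right direction --- contraction by the conjunction $p\wedge q$ must surrender $q$ while contraction by $p$ keeps it. This amounts to arranging that the two relevant minimal counterexample worlds fall on opposite sides, with $\min(\preccurlyeq_\Psi,\mods{\neg p})$ a $q$-world but $\min(\preccurlyeq_\Psi,\mods{\neg(p\wedge q)})$ a $\neg q$-world, which the chosen linear order on the four worlds delivers automatically; verifying the two contraction computations is then the only calculation required.
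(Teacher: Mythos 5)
Your proposal is correct and follows essentially the same route as the paper: part (a) is the same set-algebraic consequence of $(\mathrm{Int}_{\scriptscriptstyle\mathrm{b}}^{\odiv})$ (the paper phrases it via monotonicity plus the union inclusion, you via intersecting over the larger index set $S_1\cup S_2$), and part (b) is the same strategy of a four-world TPO-induced AGM countermodel. Your concrete instantiation differs harmlessly from the paper's (which takes $S_1=\{A\}$, $S_2=\{B\}$ with $x\prec_\Psi\{y,z,w\}$ and witness $B\rightarrow A$, whereas you use the nested pair $p$, $p\wedge q$ on a linear order with witness $q$), and your reduction of the consequent to $\bel{\Psi\odiv S_2}\subseteq\bel{\Psi\odiv S_1}$ is a tidy simplification the paper leaves implicit.
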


\noindent This is exactly as things should be, since, while the first postulate is plausible, the second seems quite dubious:

\begin{example} \label{ex:FandHKEight}
I initially believe that Alfred and Barry are both guilty ($A, B\in \bel{\Psi}$) but would entirely suspend judgment on the situation if I gave up on that belief (so that I would endorse no non-tautological combination of $A$ and $B$ in $\bel{\Psi\odiv \{A\wedge B\}}$). If I gave up my belief that Barry is guilty, I would no longer believe that Alfred is guilty ($A\notin\bel{\Psi\odiv \{B\}}$). However, while, in that situation, I would still believe that, if Barry is guilty, then Alfred is so too ($B\rightarrow A\in\bel{\Psi\odiv \{B\}}$), I would no longer believe this if I simultaneously gave up on both the belief that Barry is guilty and the belief that Alfred is ($B\rightarrow A\notin\bel{\Psi\odiv \{A, B\}}$).
\end{example}

\noindent If we take $S_1$ and $S_2$ to be respectively given by $\{A\}$ and $\{B\}$, this perfectly rationally acceptable situation runs contrary to the prescription made in the second postulate.

Although Fuhrmann \& Hansson propose  the above two principles  as  ``very tentative generalisations'' of \KCon{7} and  \KCon{8}, respectively, these cannot really be  ``generalisations'' in any obvious sense of the term, since the corresponding AGM postulates do not seem to be recoverable as special cases. This then raises the question of whether there exist any promising candidates for generalisations of \KCon{7} and  \KCon{8}. The following are obvious suggestions:

\begin{tabbing}
    \=BLAHBLAII\=\kill

\> \KConP{7} \> $[\Psi\odiv  S_1]\cap [\Psi\odiv  S_2]\subseteq [\Psi\odiv  \{\bigwedge (S_1\cup S_2)\}]$ \\[0.1cm]

\> \KConP{8}  \> If $S_1\cap\bel{\Psi\odiv\{\bigwedge (S_1\cup S_2)\}}=\varnothing$, then\\
\> \> $\bel{\Psi\odiv\{\bigwedge (S_1\cup S_2)\}}\subseteq \bel{\Psi\odiv S_1}$  \\[-0.25em]
\end{tabbing} 
\vspace{-1em}

\noindent Again, the intersective approach delivers here:

\begin{restatable}{thm}{RedtoAGMPSevenEight}
\label{thm:RedtoAGMPSevenEight}
Let $\odiv$ be a parallel contraction operator such that, for some serial contraction operator $\contract$, $\odiv$ and  $\contract$ jointly satisfy  \MultiConInter. Then, (i) if $\contract$ satisfies \KCon{7}, then  $\odiv$ satisfies \KConP{7} and, (ii) if $\contract$ satisfies \KCon{8}, then  $\odiv$ satisfies \KConP{8}.
\end{restatable}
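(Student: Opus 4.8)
The plan is to reduce both parts to purely serial facts about $\contract$ via \MultiConInter, after unfolding the singleton on the right-hand sides. Writing $C := \bigwedge(S_1\cup S_2)$, the subsumption convention for $\odiv$ gives $\bel{\Psi\odiv\{C\}}=\bel{\Psi\contract C}$, while \MultiConInter~rewrites the left-hand intersection of \KConP{7} as $\bel{\Psi\odiv S_1}\cap\bel{\Psi\odiv S_2}=\bigcap_{D\in S_1\cup S_2}\bel{\Psi\contract D}$, and rewrites the right-hand side of \KConP{8} as $\bel{\Psi\odiv S_1}=\bigcap_{A\in S_1}\bel{\Psi\contract A}$. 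So part (i) becomes the claim $\bigcap_{D\in S_1\cup S_2}\bel{\Psi\contract D}\subseteq\bel{\Psi\contract C}$, and part (ii) becomes: if $A\notin\bel{\Psi\contract C}$ for every $A\in S_1$, then $\bel{\Psi\contract C}\subseteq\bel{\Psi\contract A}$ for every $A\in S_1$.

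For (i), I would establish the finitary strengthening of \KCon{7} by induction on $p:=|S_1\cup S_2|$, namely $\bigcap_{i=1}^{p}\bel{\Psi\contract D_i}\subseteq\bel{\Psi\contract\bigwedge_{i=1}^{p}D_i}$. The base case $p=1$ is trivial; for $p\geq 2$ the inductive step intersects the hypothesis for $D_1,\ldots,D_{p-1}$ with $\bel{\Psi\contract D_p}$ and applies \KCon{7} once more, with the two antecedents $\bigwedge_{i<p}D_i$ and $D_p$, identifying $(\bigwedge_{i<p}D_i)\wedge D_p$ with $\bigwedge_{i\leq p}D_i$. Instantiating $D_1,\ldots,D_p$ as an enumeration of $S_1\cup S_2$ then delivers exactly the inclusion needed, which is \KConP{7}.

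For (ii), I would argue pointwise over $A\in S_1$. Since every such $A$ is a conjunct of $C$, we have $C\equiv A\wedge C$, so $\bel{\Psi\contract C}=\bel{\Psi\contract(A\wedge C)}$. The hypothesis of \KConP{8}, that $S_1\cap\bel{\Psi\contract C}=\varnothing$, then tells us $A\notin\bel{\Psi\contract(A\wedge C)}$, so \KCon{8}, with its second conjunct instantiated as $C$, yields $\bel{\Psi\contract(A\wedge C)}\subseteq\bel{\Psi\contract A}$, i.e. $\bel{\Psi\contract C}\subseteq\bel{\Psi\contract A}$. Taking the intersection over all $A\in S_1$ gives $\bel{\Psi\contract C}\subseteq\bigcap_{A\in S_1}\bel{\Psi\contract A}=\bel{\Psi\odiv S_1}$, which, together with $\bel{\Psi\odiv\{C\}}=\bel{\Psi\contract C}$, is precisely \KConP{8}.

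The main obstacle here is bookkeeping rather than anything conceptual: keeping the syntactic form of the iterated conjunction $\bigwedge(S_1\cup S_2)$ aligned with the binary conjunctions for which \KCon{7} and \KCon{8} are stated. Both the reassociation in the induction of (i) and the factoring $C\equiv A\wedge C$ in (ii) pass between logically equivalent conjunctions, and so rely on \KCon{6} to ensure $\bel{\Psi\contract X}$ depends only on $\Cn(X)$. I would flag at the outset that, as is standard when manipulating $\bigwedge$, \KCon{6} is taken to be in force (it is available under the hypotheses of Theorem~\ref{thm:RedtoAGMPOneSix}), so that these equivalence-level manipulations are legitimate.
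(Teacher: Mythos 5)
Your proof is correct and takes essentially the same route as the paper's: both reduce, via \MultiConInter, \KConP{7} to the serial inclusion $\bigcap_{A\in S_1\cup S_2}\bel{\Psi\contract A}\subseteq\bel{\Psi\contract \bigwedge (S_1\cup S_2)}$ obtained by iterating \KCon{7}, and \KConP{8} to showing $\bel{\Psi\contract\bigwedge (S_1\cup S_2)}\subseteq\bel{\Psi\contract A}$ pointwise for each $A\in S_1$ via a single application of \KCon{8}. Your explicit induction and the \KCon{6}-based factoring $C\equiv A\wedge C$ merely spell out bookkeeping that the paper leaves implicit in its appeals to ``repeated applications of \KCon{7}'' and to \KCon{8}, so flagging that reliance is a point of extra care rather than a divergence.
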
 

\noindent Despite its considerable appeal, \MultiConInter~has been explicitly rejected by Fuhrmann \& Hansson  (see \cite[pp.~51-57]{FurHanSMC}) due to its entailing the following  monotonicity principle: If $ S _1\subseteq S _2$, then $[\Psi \odiv  S _2]\subseteq[\Psi \odiv  S _1]$. This principle is not satisfied by their preferred constructive approach, which is governed by a remarkably weak set of principles that falls strictly short of  \KConP{1}-\KConP{8}. However, as Spohn has noted in  \cite{SpohnPC},  in the absence of a convincing, independent story as to {\em why} parallel contraction should be be governed by principles no stronger than the ones they endorse, this remains insufficient ground for criticism.

So much for single-step parallel contraction. What about the iterated case?  Surprisingly, next to no work has been carried out on this issue. Indeed,  to the best of our knowledge,   \cite{SpohnPC} is the only existing proposal regarding how this issue should be handled. However, although  Spohn's suggestion has the desirable feature of entailing  \MultiConInter, it relies heavily on his ranking theoretic formalism, the foundations of which still require a careful assessment \cite{ChandlerSpohn}. 

In what follows, we shall propose a more straightforward way of extending the intersective approach to the iterated case. Our key insight is that the situation here is analogous to the one faced in relation to extending the Harper Identity. In that situation, in the single-step case, we also had a proposal involving an intersection of belief sets (the sets $\bel{\Psi}$ and $\bel{\Psi\ast \neg A}$). In the iterated case, we faced the task of aggregating a number of conditional belief sets ($\cbel{\Psi}$ and $\cbel{\Psi\ast \neg A}$) or, equivalently, TPOs ($\preccurlyeq_\Psi$ and $\preccurlyeq_{\Psi\ast \neg A}$). The TPO aggregation procedure used in that context, however, was only characterised for {\em pairs} of TPOs. In the present context, we will need to aggregate {\em arbitrarily large finite sets} of TPOs.

\section{TeamQueue aggregation} 

\label{sec:TQAgg}

In this section, we offer generalisations to the $n$-ary case of the construction and characterisation results of the family of binary aggregators studied in \cite{DBLP:journals/ai/BoothC19} and two of their noteworthy special cases. 

The formal framework involves the following: a finite set of alternatives $W$, a finite non-empty set of indices $I =\{1,\ldots,n\}$, a tuple $\mathbf{P} = \langle\preccurlyeq_{i}\rangle_{i\in I}$ of  TPOs over $W$ known as ``profiles''  and an  aggregation function $\oplus$ mapping  all possible profiles onto single TPOs over $W$. When we shall need to refer to multiple profiles and their constituent relations, we shall use superscripted roman numerals, writing $I^j =\{1,\ldots,n^j\}$ and  $\mathbf{P}^j = \langle\preccurlyeq^j_{i}\rangle_{i\in I^j}$. When the identity of $\mathbf{P} $ is clear from context, we shall write $\oplus$ to denote  $\oplus (\mathbf{P})$ and  $x \preccurlyeq_{\oplus} y$ to denote $\langle x, y\rangle\in \oplus$, or simply $x \preccurlyeq_{\oplus} y$.

As was mentioned above, TQ aggregation was originally introduced after observing the fact that a particular identity involving the intersection of two conditional belief sets (namely (NiHI$^{\scriptscriptstyle \contract}_{\scriptscriptstyle >}$)) clashed with the AGM postulates. This result is in fact related to a more general observation, made in \cite{lehmann1992does}, that the intersection of two sets of rational conditionals needn't itself be rational. In other words, the following naive  principles of ``Conditional Intersection'' make poor suggestions, if we require that $\cbel{\preccurlyeq_{\oplus}}$ be rational or $\preccurlyeq_{\oplus}$ be a TPO:

\begin{tabbing}
    \=BLAHBLAII\=\kill
\> (CI$^{\scriptscriptstyle \oplus}_{\scriptscriptstyle >}$) \> $\cbel{\preccurlyeq_{\oplus}}=\bigcap_{i\in I}\cbel{\preccurlyeq_i}$\\[0.1cm]
\> (CI$^{\scriptscriptstyle \oplus}_{\scriptscriptstyle \min}$) \> For all $S\subseteq W$, $\min(\preccurlyeq_{\oplus},  S)$\\
\>\>$= \bigcup_{i\in I} \min(\preccurlyeq_i,  S )$\\[-0.25em]
\end{tabbing} 
\vspace{-1em}

\noindent The following example makes the point:

\begin{example}
\label{eg:Naivefails}

Let $W=\{x, y, z, w\}$ and $\preccurlyeq_1$ and $\preccurlyeq_2$ be respectively given by: $x  \prec_1 \{w, z\} \prec_1 y$
and $z \prec_2 y \prec_2 x \prec_2 w$.

(CI$^{\scriptscriptstyle \oplus}_{\scriptscriptstyle \min}$) has the consequence that $\preccurlyeq_{\oplus}$ isn't a TPO. Indeed, $\min(\preccurlyeq_1, W)\cup \min(\preccurlyeq_2, W) = \{x, z\}$. So by (CI$^{\scriptscriptstyle \oplus}_{\scriptscriptstyle \min}$), we have $z\preccurlyeq_\oplus x$ and $x\prec_\oplus w$. On the assumption that  $\preccurlyeq_{\oplus}$  is a TPO, this gives us $z\prec_\oplus w$. However, from the fact that $\min(\preccurlyeq_1, \{w, z\})\cup \min(\preccurlyeq_2, \{w, z\}) = \{w, z\}$, we have, by (CI$^{\scriptscriptstyle \oplus}_{\scriptscriptstyle \min}$), $w\preccurlyeq_\oplus z$. Contradiction.

Similarly, (CI$^{\scriptscriptstyle \oplus}_{\scriptscriptstyle >}$) has the consequence that  $\cbel{\preccurlyeq_{\oplus}}$ isn't rational. First, from the fact that  $(x\vee w) > \neg w \in \cbel{\preccurlyeq_1}\cap \cbel{\preccurlyeq_2}$, by (CI$^{\scriptscriptstyle \oplus}_{\scriptscriptstyle >}$), we have: (i) $(x\vee w) > \neg w \in \cbel{\preccurlyeq_\oplus}$. Second, from the fact that $(x\vee z) > \neg z  \notin\cbel{\preccurlyeq_2}$, by (CI$^{\scriptscriptstyle \oplus}_{\scriptscriptstyle >}$) it follows that :(ii)  $(x\vee z) > \neg z \notin \cbel{\preccurlyeq_\oplus}$. Finally, from $(w\vee z) > \neg w  \notin \cbel{\preccurlyeq_1}$, by the same principle again: (iii) $(w\vee z) > \neg w \notin \cbel{\preccurlyeq_\oplus}$. However, taken together, (i)--(iii) directly violate a principle that is valid for rational conditionals (see \cite[Lem. 17]{lehmann1992does}).

\end{example}

\subsection{Construction} 

\noindent The $n$-ary version of the aggregation method is constructively defined in a very similar way to that in which the original binary case was. The definition makes use of the representation of a TPO $\preccurlyeq$ by means of an ordered partition $\langle S_1, S_2, \ldots S_{m_i}\rangle$ of $W$, defined inductively as follows: $S_1 = \min(\preccurlyeq, W)$ and, for $i\geq 2$, $S_i=\min(\preccurlyeq, \bigcap_{j<i} S^c_j)$, where $S^c$ is the complement of $S$. This representation grounds the notion of  the {\em absolute} rank $r(x)$ of an alternative $x$, with respect to $\preccurlyeq$. The absolute rank of an alternative is given by its position in the ordered partition, so that $r(x)$ is such that $x\in S_{r(x)}$ (in cases in which the TPO is indexed, as in $\preccurlyeq_i$, we write $r_i(x)$). With this in mind, we can offer:

\begin{definition}
$\oplus$ is a {\em TeamQueue (TQ) aggregator} iff, for each profile $\mathbf{P}$ with index set $I$, there exists a sequence $\langle a_{\mathbf{P}}(i)\rangle_{i \in \mathbb{N}}$ such that $\emptyset \neq a_{\mathbf{P}}(i) \subseteq I$ for each $i$  and the ordered partition $\langle T_1, T_2, \ldots, T_m\rangle$ of indifferences classes corresponding to $\preccurlyeq_{\oplus}$ is constructed inductively as follows:
\[
T_{i} = \bigcup_{j \in a_{\mathbf{P}}(i)} \min(\preccurlyeq_j, \bigcap_{k<i}T_{k}^c)
\]
where  $m$ is minimal s.t. $\bigcup_{i\leq m} T_i = W$. 
\end{definition}

\noindent The procedure takes the input TPOs and processes them step by step to form a new TPO. At the first step, it removes the minimal elements of one or more of the input TPOs (which TPOs these are depends on the specifics of the procedure, i.e. on the value(s) in $a_{\mathbf{P}}(i)$ for the relevant step $i$) and places them in the minimal rank of the output TPO, before deleting any copies of these elements that might remain in the input TPOs. At each step, it then repeats the process using the remainders of the input TPOs, until all input TPOs have been processed entirely.\footnote{In  \cite{DBLP:journals/ai/BoothC19}, which simply discussed the binary case, the additional requirement that $a_{\mathbf{P}}(1) = \{1, 2\}$ was imposed. We do not require the $n$-ary generalisation of this requirement (i.e. $a_{\mathbf{P}}(1) = \{1,\ldots, n\}$).
}

Of particular interest is the member of the TQ aggregator family that processes the TPOs ``synchronously'', so that, at each step, the minimal elements of all TPOs are included in the relevant output rank:

\begin{definition}
The {\em Synchronous TeamQueue (STQ)} aggregator $\STQ$ is the TeamQueue aggregator for which $a_{\mathbf{P}}(i) = \{1,\ldots, n\}$ for all profiles $\mathbf{P}=\langle \preccurlyeq_1,\ldots,  \preccurlyeq_n \rangle$ and all $i$. 
\end{definition}

\noindent Another noteworthy TQ aggregator is the ``MinRank'' aggregator, whose binary version is briefly discussed in footnote 11 of \cite{DBLP:journals/ai/BoothC19} and shown there to be distinct from $\STQ$: 

\begin{definition}
The {\em MinRank} aggregator $\oplus_{\min}$ is the aggregator s.t. $x\preccurlyeq_{\oplus} y$ iff $\arg\min_{i\in I}r_i(x) \leq \arg\min_{i\in I}r_i(y) $.  
\end{definition}

\noindent While $\oplus_{\min}$ assigns to $x$ the minimal rank {\em it received among  the inputs}, $\STQ$ assigns to $x$ the minimal rank {\em it can receive within the constraints imposed by TQ aggregation}. The following example illustrates the way in which these aggregators can yield different outputs:

\begin{example}
\label{eg:STQvsMinRank}
Let $\mathbf{P}= \langle \preccurlyeq_1,\ldots, \preccurlyeq_4\rangle$, where: $\{w\} \prec_1 \{z\} \prec_1 \{x, y\}$, $\{w\} \prec_2 \{y\} \prec_2 \{x, z\}$, $\{z\} \prec_3 \{w\} \prec_3 \{x, y\}$, and $\{z\} \prec_4 \{y\} \prec_3 \{x, w\}$. We have $\STQ$ given by $\{w, z\} \prec_{\STQ}  \{x, y\}$ but $\oplus_{\min}$ is given by $\{w, z\} \prec_{\oplus_{\min}}  \{y\} \prec_{\oplus_{\min}}  \{x\}$. See Figure \ref{fig:Aggreg}.
\end{example}

\begin{figure}
  \centering
  \vspace{-1em}
    \includegraphics[width=0.4\textwidth]{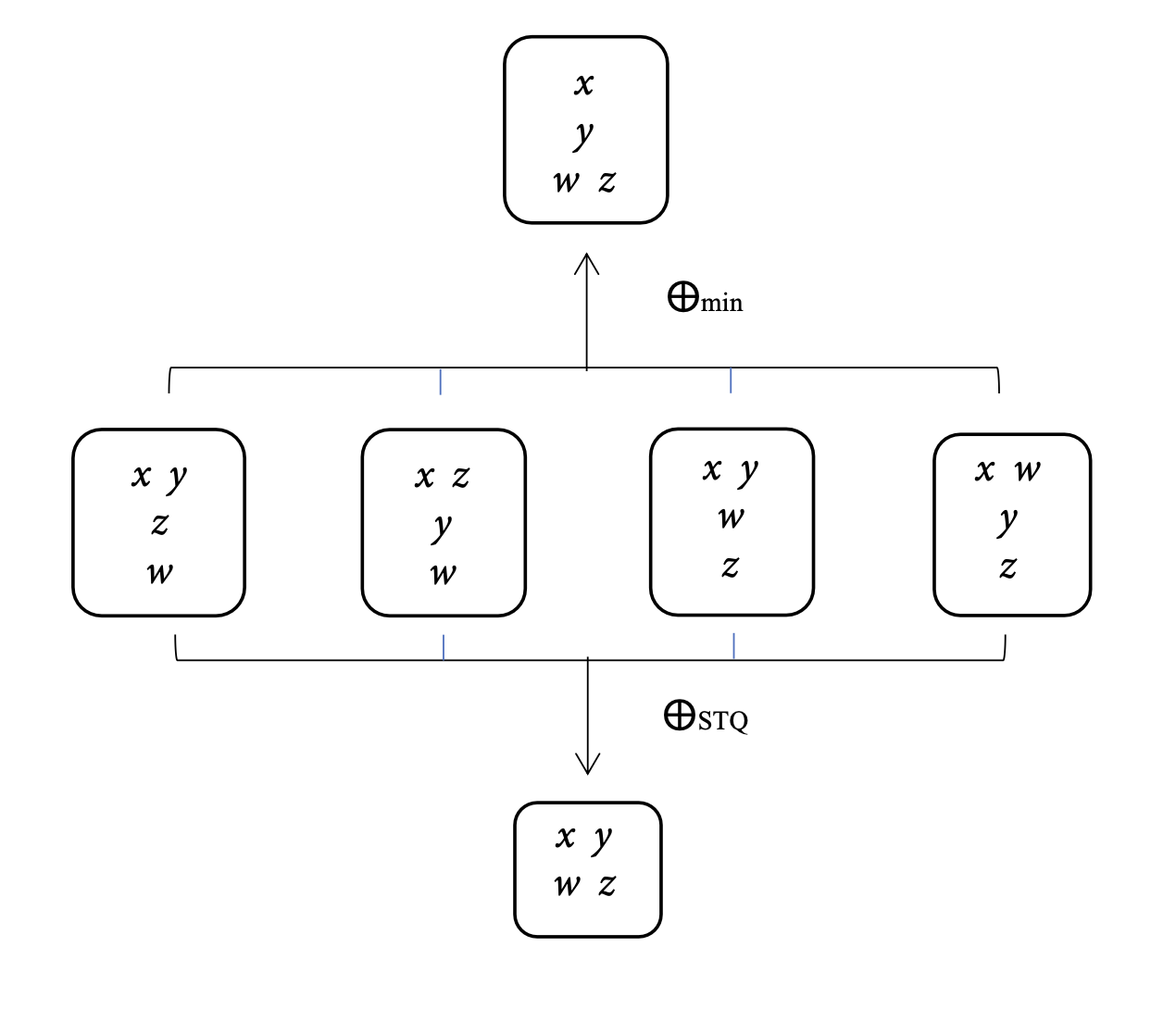}
      \vspace{-1em}
  \caption{Illustration of the $\STQ$ and $\oplus_{\min}$ aggregations in Example \ref{eg:STQvsMinRank}. Boxes represent TPOs, with lower case letters arranged such that a lower letter corresponds to a lower world in the relevant ordering.}
  \label{fig:Aggreg}
\end{figure}

\subsection{Characterisation} 
\label{subsec:charact}

The family of $n$-ary TeamQueue aggregators can be characterised in terms of minimal sets as follows:

\begin{restatable}{thm}{NaryRep}
\label{thm:NaryRep}

$\oplus$ is a TeamQueue aggregator iff it satisfies the following ``factoring'' property:
\begin{tabbing}
    \=BLAHBLAII\=\kill
\> \FacMin \> For all $S\subseteq W$, there exists $X\subseteq I$, s.t.\\
\> \> $\min(\preccurlyeq_{\oplus},  S )$$= \bigcup_{j\in X} \min(\preccurlyeq_j,  S )$\\[-0.25em]
\end{tabbing} 
\vspace{-1em}

\end{restatable}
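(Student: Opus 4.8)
The plan is to prove both directions of the biconditional, using throughout the elementary fact that, for a TPO $\preccurlyeq_{\oplus}$ with ordered partition $\langle T_1,\ldots,T_m\rangle$ and any $S\subseteq W$, we have $\min(\preccurlyeq_{\oplus},S)=S\cap T_{i^*}$, where $i^*$ is the least index with $T_{i^*}\cap S\neq\varnothing$. Throughout I write $R_i := \bigcap_{k<i}T_k^c$ for the set of alternatives not yet placed before step $i$, so that $S\subseteq R_{i^*}$ whenever $i^*$ is chosen as above (since $T_1,\ldots,T_{i^*-1}$ miss $S$).

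\emph{Forward direction} (TQ aggregator $\Rightarrow$ \FacMin). Given a TQ aggregator with construction sequence $\langle a_{\mathbf{P}}(i)\rangle$ and resulting partition $\langle T_1,\ldots,T_m\rangle$, fix $S$ and let $i^*$ be as above. By the construction, $T_{i^*}=\bigcup_{j\in a_{\mathbf{P}}(i^*)}\min(\preccurlyeq_j, R_{i^*})$, and distributing the intersection with $S$ gives $\min(\preccurlyeq_{\oplus},S)=S\cap T_{i^*}=\bigcup_{j\in a_{\mathbf{P}}(i^*)}\bigl(S\cap\min(\preccurlyeq_j,R_{i^*})\bigr)$. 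The key step is the lemma that for each $j$ with $S\cap\min(\preccurlyeq_j,R_{i^*})\neq\varnothing$ one has $S\cap\min(\preccurlyeq_j,R_{i^*})=\min(\preccurlyeq_j,S)$: any $x$ in the intersection realises the least $\preccurlyeq_j$-rank over $R_{i^*}$, and since $x\in S\subseteq R_{i^*}$ this is also the least $\preccurlyeq_j$-rank over $S$, so both sides equal $\{w\in S : r_j(w)=r_j(x)\}$. Taking $X$ to be the set of $j\in a_{\mathbf{P}}(i^*)$ for which the intersection is nonempty discards the empty summands and yields $\min(\preccurlyeq_{\oplus},S)=\bigcup_{j\in X}\min(\preccurlyeq_j,S)$ (with $X=\varnothing$ in the degenerate case $S=\varnothing$).

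\emph{Reverse direction} (\FacMin $\Rightarrow$ TQ aggregator). Here $\preccurlyeq_{\oplus}$ is a TPO by the standing hypothesis on $\oplus$; take its canonical ordered partition $\langle T_1,\ldots,T_m\rangle$, so $T_i=\min(\preccurlyeq_{\oplus},R_i)$. Applying \FacMin to each $S=R_i$ furnishes $X_i\subseteq I$ with $T_i=\bigcup_{j\in X_i}\min(\preccurlyeq_j,R_i)$, where each $X_i$ is nonempty because $T_i\neq\varnothing$. Define $a_{\mathbf{P}}(i):=X_i$ for $i\leq m$ and $a_{\mathbf{P}}(i):=I$ for $i>m$. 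A straightforward induction then shows that running the TQ construction with this sequence reproduces $\langle T_1,\ldots,T_m\rangle$: if the first $i-1$ layers agree, the remaining set $\bigcap_{k<i}T_k^c$ computed by the construction equals $R_i$, whence its $i$-th layer is $\bigcup_{j\in X_i}\min(\preccurlyeq_j,R_i)=T_i$; the minimal covering index coincides with $m$ since the partition exhausts $W$ there. Hence $\oplus$ is a TQ aggregator.

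I expect the only nontrivial point to be the ranking lemma in the forward direction, namely recognising that the presence of a single $\preccurlyeq_j$-minimal element of $R_{i^*}$ inside $S$ forces the $\preccurlyeq_j$-minima of $S$ and of $R_{i^*}$ to coincide on $S$; this is precisely where the absolute-rank structure of the ordered partition is used, and it is what converts a union of ``restricted'' minimal sets into a union of genuine $\min(\preccurlyeq_j,S)$ terms. Everything else — distributing $\cap\,S$, the inductive reconstruction of the partition, the handling of empty summands, and the choice of $a_{\mathbf{P}}(i)$ for $i>m$ — is routine bookkeeping.
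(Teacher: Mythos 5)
Your proof is correct, and one of its two directions takes a genuinely different route from the paper's. For \FacMin~$\Rightarrow$ TeamQueue aggregator, you and the paper do essentially the same thing: read a selection sequence off the output partition and show by induction on layers that the construction reproduces it. The only cosmetic difference is that the paper uses the canonical maximal choice $a_{\mathbf{P}}(i)=\{j\in I \mid \min(\preccurlyeq_j,\bigcap_{k<i}S_k^c)\subseteq S_i\}$ and must then verify the two inclusions $T_i\subseteq S_i$ and $S_i\subseteq T_i$ separately, whereas your choice of the \FacMin-witnesses $X_i$ yields the layer equality immediately (and your explicit stipulation $a_{\mathbf{P}}(i)=I$ for $i>m$ tidies a detail the paper leaves implicit). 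For the converse direction, however, the paper does not argue directly: it reduces \FacMin~to the pairwise reformulation \FacPref~via Proposition \ref{prop:FSyntSem}, and then proves \FacPref~by contradiction inside the construction (locating the layer of $y$, extracting an $l\in a_{\mathbf{P}}(j)$ with $y$ minimal among the unprocessed elements under $\preccurlyeq_l$, and deriving $y\sim_l x_l$ together with $y\sim_{\oplus_a} x_l$). You instead prove \FacMin~head-on: fixing the first layer $T_{i^*}$ meeting $S$, noting $S\subseteq R_{i^*}$, distributing the intersection with $S$, and invoking your ranking lemma that a $\preccurlyeq_j$-minimum of $R_{i^*}$ lying in $S$ forces $S\cap\min(\preccurlyeq_j,R_{i^*})=\min(\preccurlyeq_j,S)$. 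That lemma is correct --- for any witness $x$ in the intersection, both sides equal $\{w\in S \mid w\sim_j x\}$ --- as is your discarding of empty summands and the degenerate case $S=\varnothing$. The trade-off: your argument is self-contained and makes the reason for factoring transparent (the output minimal set over any $S$ is carved out of a single TeamQueue layer, and restriction to $S\subseteq R_{i^*}$ turns residual minima into genuine minima over $S$), whereas the paper's detour through \FacPref~costs nothing in context, since Proposition \ref{prop:FSyntSem} is needed independently anyway (e.g.\ to derive \SPUPlus~and \WPUPlus~in Proposition \ref{prop:SemBreakdown} and for Theorem \ref{thm:PAR}).
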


\noindent This is a weakening of the principle (CI$^{\scriptscriptstyle \oplus}_{\scriptscriptstyle \min}$) discussed above, which we have seen cannot hold in full.
 
 The characterisation can also be given in terms of a property requiring that no element of $W$ can improve its relative position with respect to {\em all} input orderings:

\begin{restatable}{prop}{FSyntSem}
\label{prop:FSyntSem}

\FacMin~is equivalent to 

\begin{tabbing}
\=BLAHBLAI\= ooooo \= \kill
\> \FacPref \> Assume that $x_1$ to $x_n$ are s.t.~$x_i \preccurlyeq_i y$. Then  there\\
\> \>   exists  $j\in I$ s.t. \>  \\[0.1cm]
\>  \>  (i)  \> if  $x_j \prec_j y$, then  $x_j \prec_\oplus y$, and \\[0.1cm]
\> \>  (ii)  \> if  $x_j \preccurlyeq_j y$, then  $x_j \preccurlyeq_\oplus y$\\[-0.25em]
\end{tabbing} 

\end{restatable}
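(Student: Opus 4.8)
The plan is to prove the two implications separately, after first recording a convenient pointwise restatement of \FacMin. A set equals a union of some of the min-sets $\min(\preccurlyeq_j, S)$ precisely when each of its members lies in one such min-set that is itself contained in the whole; concretely, \FacMin holds iff for every $S \subseteq W$ and every $z \in \min(\preccurlyeq_\oplus, S)$ there is some $j \in I$ with $z \in \min(\preccurlyeq_j, S) \subseteq \min(\preccurlyeq_\oplus, S)$. I would establish this reformulation first, since it lets me reason one world at a time and makes the natural candidate $X = \{j \in I : \min(\preccurlyeq_j, S) \subseteq \min(\preccurlyeq_\oplus, S)\}$ transparent.

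For \FacMin${}\Rightarrow{}$\FacPref, given $x_1, \dots, x_n$ and $y$ with $x_i \preccurlyeq_i y$ for all $i$, I would apply \FacMin to $S = \{x_1, \dots, x_n, y\}$ and split on whether $y$ is $\preccurlyeq_\oplus$-minimal in $S$. If $y$ is not minimal, then any $z \in \min(\preccurlyeq_\oplus, S)$ satisfies $z \prec_\oplus y$ with $z \neq y$, so $z = x_j$ for some $j$, and this $j$ gives $x_j \prec_\oplus y$, validating both clauses at once (this case does not even use the factoring property). If $y$ is minimal, factoring yields $j \in X$ with $y \in \min(\preccurlyeq_j, S)$; then $x_j \preccurlyeq_j y$ together with the minimality of $y$ forces $x_j$ to be $\preccurlyeq_j$-tied with $y$, so $x_j \in \min(\preccurlyeq_j, S) \subseteq \min(\preccurlyeq_\oplus, S)$, whence $x_j$ and $y$ are $\preccurlyeq_\oplus$-tied; clause (ii) then holds and clause (i) is vacuous since $x_j \not\prec_j y$.

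For \FacPref${}\Rightarrow{}$\FacMin, I would fix $S$, take $X$ as above, and prove $\min(\preccurlyeq_\oplus, S) = \bigcup_{j \in X} \min(\preccurlyeq_j, S)$; the inclusion $\supseteq$ is immediate, so the work is the reverse inclusion. Fixing $z \in \min(\preccurlyeq_\oplus, S)$ and writing $J_z = \{j : z \in \min(\preccurlyeq_j, S)\}$, I would show $J_z \cap X \neq \varnothing$ by contradiction. Assuming otherwise, I build a single tuple to feed \FacPref with the role of $y$ played by $z$: for $j \in J_z$ pick $w_j \in \min(\preccurlyeq_j, S)$ with $z \prec_\oplus w_j$ (available since $j \notin X$) and set $x_j := w_j$; for $j \notin J_z$ pick $u_j \in S$ with $u_j \prec_j z$ (available since $z$ is not $\preccurlyeq_j$-minimal) and set $x_j := u_j$. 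Every $x_j \preccurlyeq_j z$, so \FacPref returns an index $j'$; if $j' \in J_z$, clause (ii) gives $x_{j'} \preccurlyeq_\oplus z$, contradicting $z \prec_\oplus w_{j'}$, whereas if $j' \notin J_z$, then $x_{j'} \prec_{j'} z$ and clause (i) gives $x_{j'} \prec_\oplus z$, contradicting the $\preccurlyeq_\oplus$-minimality of $z$. Either way we reach a contradiction, so some $j \in J_z \cap X$ witnesses $z \in \bigcup_{j \in X}\min(\preccurlyeq_j, S)$.

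The main obstacle is this last direction, and specifically the fact that \FacPref guarantees only the existence of \emph{some} good index $j'$ rather than a prescribed one. The device that overcomes this is to engineer every coordinate of the input tuple so that it is already poised to contradict the conclusion: coordinates in $J_z$ are loaded with elements that clause (ii) cannot accommodate, and coordinates outside $J_z$ with elements that clause (i) cannot accommodate, so that whichever index \FacPref selects produces the contradiction. Checking that these witnesses exist under the negated hypotheses, and that the two clauses bite in exactly the two cases, is the delicate bookkeeping; the remainder is routine.
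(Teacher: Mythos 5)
Your proposal is correct and takes essentially the same approach as the paper: both directions rest on the canonical set $X=\{j\in I\mid \min(\preccurlyeq_j,S)\subseteq\min(\preccurlyeq_\oplus,S)\}$, on applying factoring to $S=\{x_1,\ldots,x_n\}\cup\{y\}$, and on the same two witness types (a strict $\preccurlyeq_j$-improver at indices where the distinguished element is not $\preccurlyeq_j$-minimal, and an element of $\min(\preccurlyeq_j,S)\setminus\min(\preccurlyeq_\oplus,S)$ otherwise). The only divergences are presentational: you prove the direction from \FacMin~to \FacPref~directly, with a case split on whether $y\in\min(\preccurlyeq_\oplus,S)$, where the paper argues by contraposition (there the $\preccurlyeq_\oplus$-minimality of $y$ is automatic from the assumed failure), and your $J_z$-based bookkeeping in the converse direction is equivalent to the paper's partition of $I$ into $X$ and its complement.
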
 

\vspace{-1em}

\noindent The characterisation of $\STQ$ can be achieved by supplementing  \FacPref~ with a principle of ``Parity'':

\begin{restatable}{thm}{PAR}
\label{thm:PAR}
$\STQ$ is the only aggregator that satisfies both \FacPref~and the following `Parity' constraint:
\begin{tabbing}
    \=BLAHBLAII\=\kill
\> \PPAR   \>  If $x \prec_{\oplus} y$ then, for each $i \in I$, there exists $z_i$ s.t.\\
\> \> $x \sim_{\oplus} z_i$ and  $z_i \prec_i y$\\[-0.25em]
\end{tabbing} 
\vspace{-1em}
\end{restatable}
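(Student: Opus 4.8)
The plan is to prove the two halves of this ``only one'' claim separately: first that $\STQ$ itself satisfies both \FacPref~and \PPAR, and then that any aggregator satisfying both must coincide with $\STQ$ on every profile. Throughout I would work with the ordered-partition representation $\langle T_1,\dots,T_m\rangle$ of the output TPO and write $R_r = \bigcap_{k<r}T_k^c$ for the set of alternatives surviving to step $r$.

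For the existence half, \FacPref~comes for free: $\STQ$ is by definition a TeamQueue aggregator, so it satisfies \FacMin~by Theorem~\ref{thm:NaryRep} and hence \FacPref~by Proposition~\ref{prop:FSyntSem}. The only real work is \PPAR. I would suppose $x \prec_{\STQ} y$, let $r$ be the output rank of $x$, so $y$ has rank $>r$, and note that both lie in $R_r$ while $y\notin T_r$. Fixing any $i\in I$, since $T_r = \bigcup_{j}\min(\preccurlyeq_j, R_r)$ and $y\notin T_r$, we have $y\notin\min(\preccurlyeq_i,R_r)$; picking any $z_i\in\min(\preccurlyeq_i,R_r)$ gives $z_i\preccurlyeq_i y$, and strictness $z_i\prec_i y$ follows precisely because $y$ is not itself $\preccurlyeq_i$-minimal in $R_r$. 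As $z_i\in T_r$ it satisfies $z_i\sim_{\STQ}x$, which is exactly what \PPAR~requires.

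For uniqueness, let $\oplus$ satisfy \FacPref~and \PPAR; by Proposition~\ref{prop:FSyntSem} it also satisfies \FacMin. I would show by induction on $r$ that the rank-$r$ class $T_r^{\oplus}$ of $\oplus$ equals the rank-$r$ class $T_r^{\STQ}$ of $\STQ$, which forces $\preccurlyeq_{\oplus}=\preccurlyeq_{\STQ}$. Assuming the classes agree below rank $r$, the survivor sets coincide with a common $R_r$. The inclusion $T_r^{\oplus}\subseteq T_r^{\STQ}$ is immediate from \FacMin~applied to $S=R_r$: since $\min(\preccurlyeq_{\oplus},R_r)=T_r^{\oplus}$, there is $X\subseteq I$ with $T_r^{\oplus}=\bigcup_{j\in X}\min(\preccurlyeq_j,R_r)\subseteq\bigcup_{j\in I}\min(\preccurlyeq_j,R_r)=T_r^{\STQ}$.

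The reverse inclusion $T_r^{\STQ}\subseteq T_r^{\oplus}$ is the crux, and is the step I expect to be the main obstacle, since it is what rules out any aggregator that is ``lazier'' than $\STQ$ and defers some input-minimal survivor to a later output rank. Here \PPAR~is the lever. I would argue by contradiction: if some $x_0\in T_r^{\STQ}\setminus T_r^{\oplus}$ exists, then $x_0$ is $\preccurlyeq_j$-minimal in $R_r$ for some $j$ yet has $\oplus$-rank $>r$; choosing any $y_0\in T_r^{\oplus}$ (nonempty, as $x_0\in R_r$) gives $y_0\prec_{\oplus}x_0$. Applying \PPAR~to $y_0\prec_{\oplus}x_0$ at index $j$ produces $z$ with $z\sim_{\oplus}y_0$ and $z\prec_j x_0$; but $z\sim_{\oplus}y_0$ places $z$ in $T_r^{\oplus}\subseteq R_r$, so the $\preccurlyeq_j$-minimality of $x_0$ in $R_r$ yields $x_0\preccurlyeq_j z$, contradicting $z\prec_j x_0$. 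This closes the induction and completes the argument; everything outside this contradiction is routine bookkeeping with the ordered-partition representation.
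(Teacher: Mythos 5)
Your proof is correct, and its uniqueness half follows essentially the same route as the paper's: a rank-by-rank induction on the ordered partitions, with \PPAR~supplying the contradiction that pins down the hard inclusion $T_r^{\STQ}\subseteq T_r^{\oplus}$ (your ``lazier aggregator'' step is word-for-word the paper's part (ii), down to the choice of $y_0\in T_r^{\oplus}$ and the appeal to the $\preccurlyeq_j$-minimality of the deferred element). There are two genuine divergences worth noting. First, you prove the existence half --- that $\STQ$ itself satisfies \PPAR~--- which the paper's proof omits entirely, even though it silently relies on this fact later (the proof of Theorem~\ref{thm:Flattest} applies \PPAR~to $\prec_{\STQ}$ without justification); your direct argument via minimal elements of the survivor set $R_r$ is correct, including the strictness step $z_i\prec_i y$, so your write-up is actually more complete than the paper's. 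Second, for the easy inclusion $T_r^{\oplus}\subseteq T_r^{\STQ}$ you invoke \FacMin~directly, whereas the paper deliberately uses only the weaker consequence \SPUPlus~(arguing that $x\in S_i\setminus T_i$ is non-minimal in every input order among survivors, then applying \SPUPlus). Your version is shorter, but the paper's buys a slightly stronger statement: uniqueness already holds for any aggregator satisfying just \SPUPlus~and \PPAR, a minimal-hypothesis observation the paper flags explicitly at the start of its proof and which your \FacMin-based argument gives up. Neither difference affects correctness of the theorem as stated, since \FacPref~is assumed anyway.
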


\noindent This principle can also be framed in terms of minimal sets:

\begin{restatable}{prop}{PARSB}
\label{prop:PARSB}
\PPAR~is equivalent to: 
\begin{tabbing}
    \=BLAHBLAII\=\kill
\> \PPARMin   \>  If $x\prec_\oplus y $ for all $x \in S^c$, $y \in S$, then\\
\> \> $\bigcup_{i\in I}\min( \preccurlyeq_{i},S)\subseteq \min( \preccurlyeq_{\oplus},S)$\\[-0.25em]
\end{tabbing}
 \end{restatable}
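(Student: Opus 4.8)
The plan is to prove the two implications separately, using throughout that $W$ is finite (so that $\min(\preccurlyeq_i, S)$ is nonempty whenever $S$ is) and that $\preccurlyeq_\oplus$ is a TPO (so that the failure of $w \preccurlyeq_\oplus y$ is equivalent to $y \prec_\oplus w$). The connective tissue between the two formulations is the observation that, for the specific ``up-set'' $S=\{z : x \preccurlyeq_\oplus z\}$, the set $\min(\preccurlyeq_\oplus, S)$ coincides exactly with the $\preccurlyeq_\oplus$-equivalence class of $x$; I would isolate this as a preliminary remark, since it is needed in both directions.

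For \PPAR $\Rightarrow$ \PPARMin, I would argue by contradiction. Fix $S$ with $x \prec_\oplus y$ for all $x \in S^c$, $y \in S$, and take any $w \in \min(\preccurlyeq_i, S)$ for some $i \in I$. Suppose $w \notin \min(\preccurlyeq_\oplus, S)$, so there is some $y \in S$ with $y \prec_\oplus w$. Applying \PPAR to the pair $(y,w)$ at index $i$ yields a witness $z$ with $z \sim_\oplus y$ and $z \prec_i w$. The key step is locating $z$ inside $S$: if $z \in S^c$, the separation hypothesis gives $z \prec_\oplus y$, contradicting $z \sim_\oplus y$; hence $z \in S$. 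But then $z \in S$ with $z \prec_i w$ contradicts $w \in \min(\preccurlyeq_i, S)$. So $w \in \min(\preccurlyeq_\oplus, S)$, which establishes the required inclusion.

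For \PPARMin $\Rightarrow$ \PPAR, the idea is to feed \PPARMin the right set. Given $x \prec_\oplus y$, I would take $S=\{z : x \preccurlyeq_\oplus z\}$; its complement $\{z : z \prec_\oplus x\}$ is strictly $\prec_\oplus$-below every element of $S$ (by transitivity through $x$), so \PPARMin applies, and by the preliminary remark $\min(\preccurlyeq_\oplus, S)$ is the $\preccurlyeq_\oplus$-equivalence class of $x$. Hence for each $i$, any $z_i \in \min(\preccurlyeq_i, S)$ satisfies $z_i \sim_\oplus x$, giving the first conjunct of \PPAR; and since $y \in S$ (because $x \preccurlyeq_\oplus y$), minimality gives $z_i \preccurlyeq_i y$. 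The main obstacle is upgrading this to the \emph{strict} $z_i \prec_i y$ demanded by \PPAR. I would handle it by contradiction: if $y \preccurlyeq_i z_i$, then $y \sim_i z_i$, so $y$ is itself $\preccurlyeq_i$-minimal in $S$, whence \PPARMin forces $y \in \min(\preccurlyeq_\oplus, S)$, i.e. $y \sim_\oplus x$, contradicting $x \prec_\oplus y$. This strictness step is the only delicate point; the remainder is routine bookkeeping with the TPO structure.
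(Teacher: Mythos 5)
Your proof is correct and takes essentially the same route as the paper's: the forward direction is the paper's argument almost verbatim (locate the \PPAR-witness $z$ inside $S$ via the separation hypothesis, then contradict $\preccurlyeq_i$-minimality), and the backward direction uses the same key set $S=\{z \mid x \preccurlyeq_\oplus z\}$ with $\min(\preccurlyeq_\oplus, S)$ identified as the $\sim_\oplus$-class of $x$. The only difference is presentational --- you argue directly where the paper contraposes --- and your strictness-upgrade step (if $y \sim_i z_i$ then $y \in \min(\preccurlyeq_i, S)$, so \PPARMin~forces $y \sim_\oplus x$, contradicting $x \prec_\oplus y$) in fact spells out more carefully what the paper compresses into the somewhat glossed remark ``since $\preccurlyeq_{\oplus}$ is a TPO we may assume $x\sim_{\oplus} z$''.
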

 \vspace{-1em}

\noindent The possibility of characterising $\oplus_{\min}$ in similar terms remains an open question.

\subsection{Further properties} 
\label{subsec:furtherprops}

Like its binary special case, $n$-ary TeamQueue aggregation satisfies several Pareto-style properties. In particular, we note that:

\begin{restatable}{prop}{SemBreakdown}
\label{prop:SemBreakdown} 
\FacPref~entails the following two properties:
\begin{tabbing}
    \=BLAHBLAII\=\kill
\>   \SPUPlus  \> Assume that, for all $i\in I$, $x_i \prec_i y$. Then  there\\
\> \>   exists $j\in I$ s.t.~$x_j \prec_\oplus y$  \\[0.1cm]
\>  \WPUPlus   \> Assume that, for all $i\in I$, $x_i \preccurlyeq_i y$. Then  there\\
\> \>   exists   $j\in I$ s.t.~$x_j \preccurlyeq_\oplus y$  \\[-0.25em]
\end{tabbing}
\vspace{-1em}
\end{restatable}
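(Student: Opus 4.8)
The plan is to derive both \SPUPlus~and \WPUPlus~as immediate instances of \FacPref, reading off the appropriate clause of its two-part conclusion in each case. The crucial observation is that the antecedent of \FacPref~requires only the \emph{weak} inequalities $x_i \preccurlyeq_i y$ for every $i \in I$, and this hypothesis is available in both scenarios: it is given outright in \WPUPlus, while in \SPUPlus~it follows from the fact that, in any TPO, $x_i \prec_i y$ entails $x_i \preccurlyeq_i y$.

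For \WPUPlus, I would assume $x_i \preccurlyeq_i y$ for all $i \in I$ and apply \FacPref~to obtain an index $j \in I$ satisfying its clauses (i) and (ii). Since in particular $x_j \preccurlyeq_j y$, clause (ii) immediately yields $x_j \preccurlyeq_\oplus y$, which is exactly the desired conclusion for that witness $j$.

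For \SPUPlus, I would assume $x_i \prec_i y$ for all $i \in I$. As noted, this secures $x_i \preccurlyeq_i y$ for all $i$, so the antecedent of \FacPref~is met and the principle again produces an index $j$ meeting (i) and (ii). This time the strict hypothesis guarantees $x_j \prec_j y$, so clause (i) delivers $x_j \prec_\oplus y$, as required.

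There is no substantive obstacle here: the entire content of the proposition is already packaged into the two-clause conclusion of \FacPref, and the only thing to check is that the weaker antecedent of \FacPref~holds under both the strict and the weak hypotheses. The routine observation that strict order implies weak order within a total preorder is all that is needed to bridge the gap in the \SPUPlus~case.
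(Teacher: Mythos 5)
Your proposal is correct and matches the paper's own proof essentially step for step: in both cases the antecedent of \FacPref~is secured (weakening $x_i \prec_i y$ to $x_i \preccurlyeq_i y$ in the \SPUPlus~case), and the relevant clause, (i) for \SPUPlus~and (ii) for \WPUPlus, is then read off for the witness $j$. No differences worth noting.
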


\noindent These properties  \SPUPlus~and  \WPUPlus~can be equivalently be framed in terms of minimal sets, as follows:

\begin{restatable}{prop}{SWPUSynt}
\label{prop:SWPUSynt}
 \SPUPlus~and  \WPUPlus~are respectively equivalent to:

\begin{tabbing}
    \=BLAHBLAII\=\kill
\> \UBO \> For all $S\subseteq W$, $\min(\preccurlyeq_\oplus, S) \subseteq \bigcup_{i\in I} \min(\preccurlyeq_i, S)$ \\[0.1cm] 
\> \LBO   \> For all $S\subseteq W$, there exists $i\in I$  s.t.~$\min(\preccurlyeq_i, S)$\\
\> \> $\subseteq \min(\preccurlyeq_\oplus, S)  $ \\[-0.25em]
\end{tabbing} 
\vspace{-1em}

\end{restatable}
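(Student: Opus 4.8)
The plan is to establish the two stated equivalences independently, in each case proving both implications by translating between the ``preference'' formulation (quantifying over individual worlds $x_i,y$) and the ``minimal-set'' formulation (quantifying over subsets $S$). Throughout I would lean on the fact that every $\preccurlyeq_i$ and $\preccurlyeq_\oplus$ is a total preorder, so that $x\notin\min(\preccurlyeq_i,S)$ is equivalent to the existence of some $w\in S$ with $w\prec_i x$, and dually $x\in\min(\preccurlyeq,S)$ means $x\preccurlyeq w$ for every $w\in S$.

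For \SPUPlus${}\Leftrightarrow{}$\UBO, the forward direction I would argue contrapositively: given $x\in\min(\preccurlyeq_\oplus,S)$, suppose $x\notin\min(\preccurlyeq_i,S)$ for every $i$; totality yields witnesses $x_i\in S$ with $x_i\prec_i x$, and \SPUPlus (with $y:=x$) delivers some $x_j\prec_\oplus x$, contradicting the $\oplus$-minimality of $x$ in $S$. For the converse, given $x_i\prec_i y$ for all $i$, I would instantiate \UBO at $S:=\{y\}\cup\{x_i\mid i\in I\}$: each $x_i$ strictly $\preccurlyeq_i$-beats $y$, so $y\notin\bigcup_i\min(\preccurlyeq_i,S)$, whence $y\notin\min(\preccurlyeq_\oplus,S)$, and the $\prec_\oplus$-witness below $y$ must be one of the $x_j$.

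The equivalence \WPUPlus${}\Leftrightarrow{}$\LBO is the delicate one, and I expect it to be the main obstacle, because \LBO asserts a \emph{single} index $i$ for which the \emph{entire} set $\min(\preccurlyeq_i,S)$ is absorbed into $\min(\preccurlyeq_\oplus,S)$, whereas \WPUPlus only produces one witnessing index per instance. For \WPUPlus${}\Rightarrow{}$\LBO I would fix a nonempty $S$, set $m:=\min(\preccurlyeq_\oplus,S)$, and argue by contradiction: if $\min(\preccurlyeq_i,S)\not\subseteq m$ for every $i$, choose $z_i\in\min(\preccurlyeq_i,S)\setminus m$; picking any $y\in m$ and applying \WPUPlus to the family $\langle z_i\rangle$ (noting $z_i\preccurlyeq_i y$ since $z_i$ is $\preccurlyeq_i$-minimal in $S$ and $y\in S$) yields some $z_j\preccurlyeq_\oplus y$, which together with $y\in m$ forces $z_j\sim_\oplus y$ and hence $z_j\in m$ --- contradicting $z_j\notin m$.

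For the remaining direction \LBO${}\Rightarrow{}$\WPUPlus, I would again take $S:=\{y\}\cup\{x_i\mid i\in I\}$ and let $i$ be the index supplied by \LBO (the hypothesis $x_i\preccurlyeq_i y$ then holds for precisely this $i$). The crux --- and the one spot needing care --- is that the hypothesis gives only $x_i\preccurlyeq_i y$, not $x_i\in\min(\preccurlyeq_i,S)$. I would therefore pick any $z\in\min(\preccurlyeq_i,S)\subseteq\min(\preccurlyeq_\oplus,S)$: if $z\neq y$ then $z=x_j$ for some $j$ and $z\preccurlyeq_\oplus y$ finishes it; if $z=y$, then $y$ is $\preccurlyeq_i$-minimal in $S$, so $y\preccurlyeq_i x_i$, which with $x_i\preccurlyeq_i y$ gives $x_i\sim_i y$, placing $x_i$ in $\min(\preccurlyeq_i,S)\subseteq\min(\preccurlyeq_\oplus,S)$ and hence $x_i\preccurlyeq_\oplus y$. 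Handling this indifference case correctly, and keeping straight that the index from \LBO coincides with an index of the hypothesis, is the only real subtlety; the rest is routine manipulation of minimal sets under totality.
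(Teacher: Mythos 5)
Your proof is correct and takes essentially the same route as the paper's: the same contrapositive use of totality for \SPUPlus$\,$/\,\UBO, the same witness set $S=\{y\}\cup\{x_i\mid i\in I\}$ in both converse directions, and the same contradiction via a $\preccurlyeq_\oplus$-minimal $y$ for \WPUPlus~to~\LBO. The only (cosmetic) divergence is that you prove \LBO~to~\WPUPlus~directly with a case split on whether the \LBO-witness equals $y$, where the paper argues by contradiction and treats $y=x_j$ as a separate preliminary case; both handle the indifference subtlety correctly.
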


\noindent  \SPUPlus~and  \WPUPlus~generalise the well-known Social Choice  properties of Weak Pareto and Pareto Weak Preference, which can be respectively given as: 

\begin{tabbing}
    \=BLAHBLAII\=\kill
\>   \SPU   \>  If,  for all $i\in I$, $x\prec_i  y$, then  $x\prec_\oplus y$  \\[0.1cm]
\>  \WPU   \>  If,  for all $i\in I$, $x\preccurlyeq_i  y$, then  $x\preccurlyeq_\oplus y$  \\[-0.25em]
\end{tabbing}
\vspace{-1em}

\noindent \SPU~and   \WPU~can also be formulated in terms of upper and lower bounds on the output relation $\preccurlyeq_\oplus$, jointly corresponding to $\bigcap_{i\in I}\preccurlyeq_i ~ \subseteq ~ \preccurlyeq_\oplus ~ \subseteq ~\bigcup_{i\in I}\preccurlyeq_i$.

\subsection{The connection to rational closure}

 \noindent  As mentioned above, in Section \ref{ss:ItSer}, a connection was drawn in \cite{DBLP:journals/ai/BoothC19} between the binary special case of $\STQ$ and the concept of the rational closure of a set of conditionals . 
 It was shown, as a corollary of a key theorem, that the conditional belief set corresponding to the TPO obtained by aggregation of two input TPOs is given by the rational closure of the conditional belief sets corresponding to these input TPOs (see their Theorem 3 and Corollary 1). Here we can report that this theorem and its corollary generalise straightforwardly to the $n$-ary case. Indeed, we first recall the following definition:

\begin{definition}
\label{dfn:Flatter}
Let $\sqsupseteq$ be a binary relation on the set of TPOs over $W$ s.t. 
$
\langle S_1, S_2, \ldots, S_m \rangle
\sqsupseteq$$
\langle T_1, T_2, \ldots, T_m \rangle 
$
iff
either (i) $S_i = T_i$ for all $i = 1, \ldots, m$, or (ii) $S_i \supset T_i$ for the first $i$ s.t. $S_i \neq T_i$. 
\end{definition}

\noindent The relation $\sqsupseteq$ intuitively partially orders TPOs by what one could call comparative ``flatness''. So, for instance, where $\preccurlyeq_1$ and $\preccurlyeq_2$ are respectively given by $\{x, y\} \prec_1 \{z, w\}$ and $\{x, y\} \prec_2 z \prec_2 w$ and so $\preccurlyeq_1$ is intuitively ``flatter'' than $\preccurlyeq_2$, we have $\preccurlyeq_1\,\sqsupseteq\,\preccurlyeq_2$. We can then show the following: 

\begin{restatable}{thm}{Flattest}
\label{thm:Flattest}
$\preccurlyeq_{\STQ} \, \sqsupseteq \, \preccurlyeq_{\oplus}$ for  any aggregator $\oplus$ satisfying \SPUPlus.
\end{restatable}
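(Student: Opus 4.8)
The plan is to work directly with the ordered partitions of the two output TPOs and run a ``first point of difference'' argument, using the equivalence between \SPUPlus~and \UBO~established in Proposition~\ref{prop:SWPUSynt}. Fix notation by writing $\langle T_1, T_2, \ldots\rangle$ for the ordered partition of $\preccurlyeq_{\STQ}$ and $\langle U_1, U_2, \ldots\rangle$ for that of $\preccurlyeq_{\oplus}$, padding the shorter of the two with empty trailing blocks so that both are indexed by a common range. By the construction of $\STQ$ (with $a_{\mathbf{P}}(i)=I$ for every $i$), each STQ block is the union over all inputs of the corresponding minimal set, $T_i=\bigcup_{j\in I}\min(\preccurlyeq_j,\bigcap_{k<i}T_k^c)$, whereas, by the inductive definition of the ordered partition of a TPO, $U_i=\min(\preccurlyeq_\oplus,\bigcap_{k<i}U_k^c)$. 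Note that we only ever use that $\oplus$ satisfies \SPUPlus, never that it is itself a TQ aggregator.

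Next I would locate the first index $i^\ast$ at which the two partitions differ; if there is none the partitions coincide and $\preccurlyeq_{\STQ}\sqsupseteq\preccurlyeq_\oplus$ holds by clause (i) of the definition of $\sqsupseteq$, so assume $i^\ast$ exists. The key observation is that, since $T_k=U_k$ for every $k<i^\ast$, the ``remaining set'' is the same on both sides: writing $S=\bigcap_{k<i^\ast}T_k^c=\bigcap_{k<i^\ast}U_k^c$, we have $T_{i^\ast}=\bigcup_{j\in I}\min(\preccurlyeq_j,S)$ and $U_{i^\ast}=\min(\preccurlyeq_\oplus,S)$. Applying \UBO~(equivalently \SPUPlus, via Proposition~\ref{prop:SWPUSynt}) to this particular $S$ yields $U_{i^\ast}=\min(\preccurlyeq_\oplus,S)\subseteq\bigcup_{j\in I}\min(\preccurlyeq_j,S)=T_{i^\ast}$. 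Since $T_{i^\ast}\neq U_{i^\ast}$ by choice of $i^\ast$, this inclusion is strict, i.e.\ $T_{i^\ast}\supset U_{i^\ast}$, which is exactly clause (ii) of the definition of $\sqsupseteq$; hence $\preccurlyeq_{\STQ}\sqsupseteq\preccurlyeq_\oplus$.

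The argument is short, and the remaining points are bookkeeping rather than substance. I would check that $S\neq\varnothing$ at the first point of difference---otherwise both blocks would be empty and hence equal, contradicting the choice of $i^\ast$---so that $\min(\preccurlyeq_\oplus,S)$ and $\bigcup_{j\in I}\min(\preccurlyeq_j,S)$ are genuinely nonempty minimal sets; and the padding with empty blocks is introduced precisely to handle the case where the two partitions have different numbers of nonempty blocks. The main (and only mild) obstacle is confirming that the $i^\ast$-th block of $\preccurlyeq_\oplus$ really equals $\min(\preccurlyeq_\oplus,S)$, i.e.\ that restricting attention to the remaining set $S$ commutes with reading off the next block of the ordered partition; this is immediate from the inductive definition of the partition once the earlier blocks are known to coincide, and the analogous identity for $\STQ$ follows directly from its defining recursion.
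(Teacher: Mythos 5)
Your proof is correct, and it takes a genuinely different route at the key step. Both you and the paper reduce the claim to showing a strict inclusion between blocks at the first index $i^\ast$ where the two ordered partitions differ (this skeleton is forced by the definition of $\sqsupseteq$), but the inclusion itself is established differently. The paper argues pointwise by contradiction: taking $y\in S_{i^\ast}\setminus T_{i^\ast}$ and $x\in T_{i^\ast}$, it invokes the Parity property \PPAR~(which $\STQ$ satisfies by Theorem~\ref{thm:PAR}) to produce, for each input index $k$, a witness $z_k\sim_{\STQ}x$ with $z_k\prec_k y$, and then applies \SPUPlus~to this tuple of witnesses to derive a contradiction with $y\in S_{i^\ast}$. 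You instead work at the level of whole blocks: you read $T_{i^\ast}=\bigcup_{j\in I}\min(\preccurlyeq_j,S)$ directly off the $\STQ$ recursion, identify $U_{i^\ast}=\min(\preccurlyeq_\oplus,S)$ for the common remainder set $S$, and apply the minimal-set reformulation \UBO~of \SPUPlus~(Proposition~\ref{prop:SWPUSynt}) exactly once, obtaining the inclusion in a single line with strictness for free from the choice of $i^\ast$. Your version is shorter and makes the conceptual content transparent: $\STQ$ is flattest because each of its blocks is as large as \UBO~permits any block to be. The paper's version buys something different: it never unfolds the construction of $\STQ$, using only its axiomatic Parity property, so it dovetails with the characterisation in Theorem~\ref{thm:PAR} and isolates which axioms on each side (\PPAR~for the flattest aggregator, \SPUPlus~for the other) actually drive the result. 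Your bookkeeping points (padding to a common length, $S\neq\varnothing$ at the first difference, and the identification of $U_{i^\ast}$ with $\min(\preccurlyeq_\oplus,S)$ once earlier blocks coincide) are all handled correctly; note that the paper's own proof must make the analogous nonemptiness observation ($T_{i^\ast}\neq\varnothing$) explicitly, whereas in your argument the empty case is excluded automatically since it would force $T_{i^\ast}=U_{i^\ast}=\varnothing$.
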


\noindent To appreciate the significance of this result, we firstly need to understand how \SPUPlus~translates into the language of conditionals. Recall that this property was shown to be equivalent to a property that we called \UBO. This second property can be presented in terms of sets of conditionals as stating that the intersection of the sets of conditionals corresponding to the inputs is included in the set of conditionals corresponding to the output: 

\begin{tabbing}
    \=BLAHBLAII\=\kill
\> (UB$^{\scriptscriptstyle \oplus}_{\scriptscriptstyle >}$) \> $\bigcap\cbel{\preccurlyeq_i}\subseteq \cbel{\preccurlyeq_{\oplus}}$ \\[-0.25em]
\end{tabbing} 

\vspace{-1em}

\noindent Theorem \ref{thm:Flattest}, then, tells us that $\STQ$ returns the flattest TPO whose corresponding conditional belief set contains the intersection of the conditional belief sets corresponding to the input TPOs. Secondly, we know from Booth \& Paris \cite{Booth1998-BOOANO}  that the rational closure of a set of conditionals corresponds to the flattest TPO that satisfies it. Finally, putting the above two observations together then leaves us with the following immediate corollary: 

\begin{restatable}{cor}{Rat}
\label{cor:Rat}
$\cbel{\preccurlyeq_{ \STQ} }=\CRat(\bigcap_i \cbel{\preccurlyeq_i})$
\end{restatable}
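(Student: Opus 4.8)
The plan is to reduce the statement to the Booth \& Paris characterisation of rational closure \cite{Booth1998-BOOANO}, which says that $\CRat(\Gamma)$ is exactly the conditional belief set of the (unique) \emph{flattest} TPO whose conditional belief set contains $\Gamma$. Writing $\Gamma := \bigcap_i \cbel{\preccurlyeq_i}$, the corollary therefore follows if I can establish two facts about $\preccurlyeq_{\STQ}$: first, that it \emph{satisfies} $\Gamma$, i.e.~$\Gamma \subseteq \cbel{\preccurlyeq_{\STQ}}$; and second, that it is $\sqsupseteq$-maximal (flattest) among all TPOs satisfying $\Gamma$. Given these, Booth \& Paris immediately yield $\cbel{\preccurlyeq_{\STQ}} = \CRat(\Gamma)$.

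For the first fact, I would argue purely by chaining the earlier results. Since $\STQ$ is a TeamQueue aggregator, it satisfies \FacPref, hence by Proposition~\ref{prop:SemBreakdown} it satisfies \SPUPlus, which by Proposition~\ref{prop:SWPUSynt} is equivalent to \UBO. The conditional reading of the latter is precisely the principle (UB$^{\scriptscriptstyle \oplus}_{\scriptscriptstyle >}$), namely $\bigcap_i \cbel{\preccurlyeq_i} \subseteq \cbel{\preccurlyeq_{\STQ}}$, i.e.~$\Gamma \subseteq \cbel{\preccurlyeq_{\STQ}}$. So $\preccurlyeq_{\STQ}$ lies among the $\Gamma$-satisfying TPOs.

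For the second fact, I would invoke Theorem~\ref{thm:Flattest}, which states $\preccurlyeq_{\STQ} \sqsupseteq \preccurlyeq_\oplus$ for every aggregator $\oplus$ satisfying \SPUPlus. The point is to deploy this against an \emph{arbitrary} candidate TPO rather than against a named aggregator. Given any TPO $\preccurlyeq$ with $\Gamma \subseteq \cbel{\preccurlyeq}$, I would construct an aggregator $\oplus$ that returns $\preccurlyeq$ on the fixed profile $\mathbf{P} = \langle \preccurlyeq_i \rangle_{i\in I}$ under consideration and behaves like $\STQ$ on every other profile. Because \SPUPlus / \UBO / (UB$^{\scriptscriptstyle \oplus}_{\scriptscriptstyle >}$) is a \emph{per-profile} condition, and because at $\mathbf{P}$ it amounts to exactly the hypothesis $\Gamma \subseteq \cbel{\preccurlyeq}$ while holding elsewhere via $\STQ$, this $\oplus$ satisfies \SPUPlus globally. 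Theorem~\ref{thm:Flattest} then gives $\preccurlyeq_{\STQ} \sqsupseteq \preccurlyeq_\oplus = \preccurlyeq$, establishing the desired maximality.

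The main obstacle I anticipate is precisely this bridging step: Theorem~\ref{thm:Flattest} quantifies over aggregators, whereas the Booth \& Paris notion of ``flattest model'' quantifies over all TPOs satisfying $\Gamma$, so I must verify that every $\Gamma$-satisfying TPO is realisable as an \SPUPlus-aggregator's output on $\mathbf{P}$ (the construction above) and that the per-profile nature of \SPUPlus makes this legitimate. A secondary care point is ensuring that the $\sqsupseteq$-maximum of Definition~\ref{dfn:Flatter} is unique up to its conditional belief set and coincides with the Booth \& Paris flattest model, so that the final conclusion is an equality of conditional belief sets and not merely of TPOs; once flatness-maximality and $\Gamma$-satisfaction are both in hand, this identification is routine.
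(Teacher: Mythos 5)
Your proof is correct and takes essentially the same route as the paper: establish $\Gamma := \bigcap_i \cbel{\preccurlyeq_i} \subseteq \cbel{\preccurlyeq_{\STQ}}$ via \SPUPlus~(equivalently \UBO, i.e.~(UB$^{\scriptscriptstyle \oplus}_{\scriptscriptstyle >}$)), invoke Theorem~\ref{thm:Flattest} for maximal flatness, and conclude by the Booth--Paris identification of $\CRat(\Gamma)$ with the flattest $\Gamma$-satisfying TPO, exactly as the paper does in the paragraph preceding the corollary. Your explicit bridging construction---realising an arbitrary $\Gamma$-satisfying TPO as the output of an \SPUPlus-satisfying aggregator on the fixed profile, which is legitimate because \SPUPlus~is a per-profile condition (as the proofs of Propositions~\ref{prop:SemBreakdown} and~\ref{prop:SWPUSynt} confirm)---carefully fills the quantifier gap between ``flattest among \SPUPlus-aggregator outputs'' and ``flattest among all $\Gamma$-satisfying TPOs'' that the paper glosses over in calling the corollary immediate.
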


\section{Parallel contraction via TeamQueue aggregation} 

\label{sec:TQPara}

An obvious suggestion is to define iterated parallel contraction in terms of iterated contraction, using TeamQueue aggregation, as follows: 
\begin{tabbing}
    \=BLAHBLAII\=\kill
\> \MultiConAggregPrec\> $\preccurlyeq_{\Psi\odiv\{A_1,\ldots, A_n\}}=\oplus\{\preccurlyeq_{\Psi\contract A_1}, \ldots, \preccurlyeq_{\Psi\contract A_n}\}$\\[-0.25em]
\end{tabbing} 
\vspace{-1em}

\noindent If we impose the constraint that $a_{\mathbf{P}}(1) = \{1,\ldots, n\}$ on the construction of $\oplus$, as is the case in relation to $\STQ$, then this suggestion yields \MultiConInter--the principle according to which the belief set obtained after contraction by a set $S$ is given by the intersection of the belief sets obtained after contractions by each of the members of $S$--as its special case for single-step contraction. 

We can then use the above principle to define the class of TeamQueue parallel contraction operators: 

\begin{definition}
\label{def:TQPackageContraction}
$\odiv$ is a {\em TeamQueue parallel contraction operator} if and only if there exists an AGM contraction operator $\contract$, s.t. $\odiv$ and $\contract$ jointly satisfy \MultiConAggregPrec, where $\oplus$ is a TeamQueue aggregator.
\end{definition}

\noindent More specific concepts, such as, for example, that of an STQ parallel contraction operator, can be defined in the same manner. As an immediate corollary of Theorem \ref{thm:NaryRep}, we then also have the following characterisation result:

\begin{restatable}{cor}{TQPackageContractionCharacterisation}
\label{cor:TQPackageContractionCharacterisation}
$\odiv$ is a TeamQueue parallel contraction operator if and only if it satisfies 

\begin{tabbing}
    \=BLAHBLAII\=\kill
\> \FacConBel \> For all $B\in L$, there exists $X\subseteq I$ s.t. \\
\> \> $\bel{(\Psi\odiv S) \ast B}= \bigcap_{i\in X} \bel{(\Psi\contract A_i)\ast B}$\\[-0.25em]
\end{tabbing} 
\vspace{-1em}

\end{restatable}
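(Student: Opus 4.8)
The plan is to derive the result by pushing the semantic characterisation of Theorem~\ref{thm:NaryRep} through the standard AGM correspondence between belief sets and minimal sets. Throughout, fix an AGM contraction operator $\contract$ and suppose $\odiv$ and $\contract$ are related by \MultiConAggregPrec, so that for each $\Psi$ and each $S=\{A_1,\ldots,A_n\}$ the relevant profile is $\mathbf{P}=\langle\preccurlyeq_{\Psi\contract A_1},\ldots,\preccurlyeq_{\Psi\contract A_n}\rangle$ and the aggregate output is $\preccurlyeq_{\oplus}=\preccurlyeq_{\Psi\odiv S}$. By Theorem~\ref{thm:NaryRep}, being a TeamQueue aggregator is equivalent to \FacMin, so it suffices to show that, relative to these profiles, \FacMin~and \FacConBel~say the same thing.

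The translation rests on two facts. First, the AGM revision semantics give, for any state $\Psi'$ and sentence $B$, $\mods{\bel{\Psi'\ast B}}=\min(\preccurlyeq_{\Psi'},\mods{B})$; applying this to $\Psi'=\Psi\odiv S$ and to each $\Psi'=\Psi\contract A_i$ turns the left- and right-hand minimal sets of \FacMin~into $\mods{\bel{(\Psi\odiv S)\ast B}}$ and the $\mods{\bel{(\Psi\contract A_i)\ast B}}$ respectively, once we set $S=\mods{B}$. Second, since belief sets are deductively closed, intersection is dual to union of models: $\mods{\bigcap_{i\in X}\bel{(\Psi\contract A_i)\ast B}}=\bigcup_{i\in X}\mods{\bel{(\Psi\contract A_i)\ast B}}$. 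Combining the two, the identity $\min(\preccurlyeq_{\oplus},\mods{B})=\bigcup_{i\in X}\min(\preccurlyeq_{\Psi\contract A_i},\mods{B})$ holds exactly when $\bel{(\Psi\odiv S)\ast B}=\bigcap_{i\in X}\bel{(\Psi\contract A_i)\ast B}$, because two belief sets coincide iff their model sets do.

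It remains to line up the quantifiers. Because $L$ is finitely generated, every $S\subseteq W$ equals $\mods{B}$ for some $B\in L$ and conversely; hence ``for all $S\subseteq W$'' in \FacMin~and ``for all $B\in L$'' in \FacConBel~range over the same objects, and the existential choice of $X\subseteq I$ may depend on $S$ (resp.\ $B$) in both. This yields the equivalence of \FacMin~and \FacConBel~on contraction profiles, and with Theorem~\ref{thm:NaryRep} completes the forward direction: a TeamQueue $\oplus$ satisfies \FacMin~on all profiles, a fortiori on those arising from contractions, hence $\odiv$ satisfies \FacConBel.

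The point requiring the most care is the converse, since \FacConBel~constrains $\oplus$ only on profiles that actually arise from contractions, whereas being a TeamQueue aggregator is a condition on all profiles. Here I would argue as follows. \FacConBel~forces $\bel{(\Psi\odiv S)\ast B}$, and hence $\preccurlyeq_{\Psi\odiv S}$, to be a function of the profile $\mathbf{P}$ alone; and, by the per-profile content of the proof of Theorem~\ref{thm:NaryRep}, the factoring identity guarantees that this output is realisable as a TeamQueue output of $\mathbf{P}$, i.e.\ that some admissible choice $\langle a_{\mathbf{P}}(i)\rangle_i$ reproduces it. Fixing such a choice on each contraction profile and extending by any default (e.g.\ the synchronous choice $a_{\mathbf{P}}(i)=\{1,\ldots,n\}$) on the remaining profiles defines a genuine TeamQueue aggregator $\oplus$ witnessing \MultiConAggregPrec, so $\odiv$ is a TeamQueue parallel contraction operator.
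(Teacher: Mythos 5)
Your translation machinery is exactly the paper's intended route: the paper offers no separate proof of this corollary, presenting it as an ``immediate'' consequence of Theorem~\ref{thm:NaryRep}, and the implicit argument is precisely your dictionary --- $\mods{\bel{\Psi'\ast B}}=\min(\preccurlyeq_{\Psi'},\mods{B})$, the duality $\mods{\bigcap_{i\in X}T_i}=\bigcup_{i\in X}\mods{T_i}$ for deductively closed theories, and the bijection between subsets of $W$ and sentences of the finitely generated $L$ --- which turns \FacMin~on contraction profiles into \FacConBel. Your forward direction is correct, and your observation that the converse is \emph{not} immediate (since \FacConBel~constrains $\oplus$ only on profiles that arise from contractions) is a genuine refinement over the paper; your repair of extending by the synchronous choice $a_{\mathbf{P}}(i)=\{1,\ldots,n\}$ on the remaining profiles is sound as far as it goes.

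However, one step in your converse would fail as written: the claim that \FacConBel~``forces $\bel{(\Psi\odiv S)\ast B}$, and hence $\preccurlyeq_{\Psi\odiv S}$, to be a function of the profile $\mathbf{P}$ alone.'' It does not, because the witness $X\subseteq I$ is existentially quantified per $B$ and per instance $(\Psi,S)$. Concretely, suppose $(\Psi,S)$ and $(\Psi',S')$ induce the same two-element profile $\langle\preccurlyeq_1,\preccurlyeq_2\rangle$ with $\preccurlyeq_1\neq\preccurlyeq_2$: an operator $\odiv$ with $\preccurlyeq_{\Psi\odiv S}=\preccurlyeq_1$ (the TQ output with $a_{\mathbf{P}}(i)=\{1\}$ throughout) and $\preccurlyeq_{\Psi'\odiv S'}=\preccurlyeq_2$ satisfies \FacConBel~at both instances (take $X=\{1\}$, resp.\ $X=\{2\}$, for every $B$), yet no single aggregation \emph{function} $\oplus$ can reproduce both outputs, so $\odiv$ cannot witness \MultiConAggregPrec. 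What your construction actually requires is the additional assumption that $\odiv$ assigns the same posterior TPO to instances sharing a contraction profile --- a functionality condition that the paper silently builds in by declaring the corollary immediate. With that assumption made explicit, the rest of your argument (choosing a realising sequence $\langle a_{\mathbf{P}}(i)\rangle_i$ on each contraction profile via the postulate-to-construction half of the proof of Theorem~\ref{thm:NaryRep}, default elsewhere) completes the converse correctly, and is in fact more rigorous than anything the paper provides.
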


\noindent The various results of sections \ref{subsec:charact} and  \ref{subsec:furtherprops} also have straightforward corollaries, starting with the following immediate joint consequence of  Propositions \ref{prop:SemBreakdown} and \ref{prop:SWPUSynt}:

\begin{restatable}{cor}{TQPackageContractionSound}
\label{cor:TQPackageContractionSound}
If $\odiv$ is a TeamQueue parallel contraction operator then it satisfies 

\begin{tabbing}
    \=BLAHBLAII\=\kill
\> \UBOConBel \> For all $B\in L$, $\bigcap_{i\in I} \bel{(\Psi\contract A_i)\ast B}\subseteq$\\
\> \> $  \bel{(\Psi\odiv S) \ast B }$ \\[0.1cm] 
\> \LBConBel   \> For all $B\in L$,  $\bel{(\Psi\odiv S) \ast B}\subseteq$\\
\> \> $ \bigcup_{i\in I} \bel{(\Psi\contract A_i)\ast B}$ \\[-0.25em]
\end{tabbing} 

\end{restatable}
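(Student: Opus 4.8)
The plan is to descend from the minimal-set characterisations already established for TeamQueue aggregators to their belief-set counterparts, exploiting the standard duality between belief sets and their sets of models. Since $\odiv$ is a TeamQueue parallel contraction operator, there is an AGM contraction operator $\contract$ with $\preccurlyeq_{\Psi\odiv S}=\oplus\langle\preccurlyeq_{\Psi\contract A_1},\ldots,\preccurlyeq_{\Psi\contract A_n}\rangle$ for some TeamQueue aggregator $\oplus$. By Theorem~\ref{thm:NaryRep} together with Proposition~\ref{prop:FSyntSem}, $\oplus$ satisfies \FacPref; by Proposition~\ref{prop:SemBreakdown} it then satisfies \SPUPlus~and \WPUPlus; and by Proposition~\ref{prop:SWPUSynt} these are equivalent to \UBO~and \LBO~respectively. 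It thus remains only to re-express \UBO~and \LBO~in the belief-set idiom.

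The bridge is the AGM revision semantics $\mods{\bel{\Theta\ast B}}=\min(\preccurlyeq_\Theta,\mods B)$ recalled in Subsection~\ref{ss:SSSC}. Writing $\preccurlyeq_i$ for $\preccurlyeq_{\Psi\contract A_i}$, this gives $\mods{\bel{(\Psi\contract A_i)\ast B}}=\min(\preccurlyeq_i,\mods B)$ and, via \MultiConAggregPrec, $\mods{\bel{(\Psi\odiv S)\ast B}}=\min(\preccurlyeq_\oplus,\mods B)$. I would then use two elementary facts about deductively closed sets: belief sets are ordered by reverse model-inclusion ($K\subseteq K'$ iff $\mods{K'}\subseteq\mods K$), and $\mods{K\cap K'}=\mods K\cup\mods{K'}$. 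Under this dictionary, the inclusion \UBOConBel, namely $\bigcap_{i\in I}\bel{(\Psi\contract A_i)\ast B}\subseteq\bel{(\Psi\odiv S)\ast B}$, is equivalent to $\min(\preccurlyeq_\oplus,\mods B)\subseteq\bigcup_{i\in I}\min(\preccurlyeq_i,\mods B)$. Because $L$ is finitely generated, every subset of $W$ equals $\mods B$ for some $B\in L$, so quantifying over all $B\in L$ is tantamount to quantifying over all subsets of $W$, and this last inclusion is exactly \UBO.

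For \LBConBel~I would instantiate \LBO~at the set $\mods B$ to obtain some $i\in I$ with $\min(\preccurlyeq_i,\mods B)\subseteq\min(\preccurlyeq_\oplus,\mods B)$; by reverse model-inclusion this reads $\bel{(\Psi\odiv S)\ast B}\subseteq\bel{(\Psi\contract A_i)\ast B}$, whence a fortiori $\bel{(\Psi\odiv S)\ast B}\subseteq\bigcup_{i\in I}\bel{(\Psi\contract A_i)\ast B}$, which is \LBConBel. The only point requiring care is the systematic order-reversal between belief sets and their model sets: union of belief sets does not dualise to a single minimal set, so one must note that \LBConBel~is in fact strictly weaker than the single-witness containment that \LBO~delivers and therefore follows immediately. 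Everything else is bookkeeping across the syntactic/semantic correspondence.
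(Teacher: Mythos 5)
Your proof is correct and follows exactly the route the paper intends: the corollary is stated as an immediate joint consequence of Propositions \ref{prop:SemBreakdown} and \ref{prop:SWPUSynt}, i.e.\ obtaining \SPUPlus~and \WPUPlus~(via Theorem \ref{thm:NaryRep} and Proposition \ref{prop:FSyntSem}), passing to \UBO~and \LBO, and translating through the standard duality $\mods{\bel{\Theta\ast B}}=\min(\preccurlyeq_\Theta,\mods{B})$ with $S=\mods{B}$. You merely spell out the bookkeeping (reverse model-inclusion, $\mods{K\cap K'}=\mods{K}\cup\mods{K'}$, and the observation that \LBConBel~is weaker than the single-witness containment \LBO~supplies) that the paper leaves implicit.
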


\vspace{-1em}

\noindent Importantly, TeamQueue parallel contraction operators satisfy some rather compelling analogues of \CConR{1}--\CConR{4} for the parallel case:

\begin{restatable}{prop}{CtoCPack}
\label{prop:CtoCPackTwo}
Let $\odiv$ be a parallel contraction operator such that, for some AGM contraction operator $\contract$ and  TeamQueue aggregator $\oplus$, $\odiv$, $\contract$ and $\oplus$ jointly satisfy \MultiConAggregPrec. Then, if $\contract$ satisfies \CConR{1}--\CConR{4},  then $\odiv$ satisfies:

\begin{tabbing}
    \=BLAHBLAII\=\kill

\> \CConRn{1}  \>  If $x,y \in\mods{\bigwedge \neg S}$ then $x \preccurlyeq_{\Psi \odiv S}   y$ iff  $x \preccurlyeq_\Psi y$ \\[0.1cm]

\> \CConRn{2} \> If $x,y \in\mods{\bigwedge S}$ then $x \preccurlyeq_{\Psi \odiv S}   y$ iff  $x \preccurlyeq_\Psi y$\\[0.1cm]

\> \CConRn{3}   \> If $x \in\mods{\bigwedge \neg S}$, $y \notin\mods{\bigwedge \neg S}$  and $x \prec_\Psi y$ then \\
\> \> $x \prec_{\Psi \odiv S}   y$ \\[0.1cm]

\> \CConRn{4} \> If $x \in\mods{\bigwedge \neg S}$, $y \notin\mods{\bigwedge \neg S}$ and $x \preccurlyeq_\Psi y$ then\\
\> \> $x \preccurlyeq_{\Psi \odiv S}   y$\\[-0.25em]
\end{tabbing} 
\vspace{-1em}

\end{restatable}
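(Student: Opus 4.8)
The plan is to reduce everything to the behaviour of the individual serial contractions $\contract A_i$ as governed by \CConR{1}--\CConR{4}, and then transfer these facts to the aggregate ordering $\preccurlyeq_{\Psi\odiv S} = \oplus\langle\preccurlyeq_{\Psi\contract A_1},\ldots,\preccurlyeq_{\Psi\contract A_n}\rangle$ using the Pareto-style bounds on $\oplus$. First I would record two facts. Semantically, $\mods{\bigwedge\neg S} = \bigcap_{i\in I}\mods{\neg A_i}$ and $\mods{\bigwedge S} = \bigcap_{i\in I}\mods{A_i}$, so $x\in\mods{\bigwedge\neg S}$ just says $x\in\mods{\neg A_i}$ for every $i$, while $y\notin\mods{\bigwedge\neg S}$ says $y\in\mods{A_i}$ for at least one $i$. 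Since $\oplus$ is a TeamQueue aggregator, Proposition \ref{prop:SemBreakdown} gives \SPUPlus~and \WPUPlus, and hence (taking all the $x_i$ equal) the ordinary Pareto properties \SPU~and \WPU, i.e.~$\bigcap_{i\in I}\preccurlyeq_{\Psi\contract A_i}\,\subseteq\,\preccurlyeq_{\Psi\odiv S}\,\subseteq\,\bigcup_{i\in I}\preccurlyeq_{\Psi\contract A_i}$.

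For \CConRn{1} and \CConRn{2}, the hypotheses place both $x$ and $y$ inside $\mods{\neg A_i}$ for every $i$ (resp.~inside $\mods{A_i}$ for every $i$), so \CConR{1} (resp.~\CConR{2}) applies to each input and yields $x\preccurlyeq_{\Psi\contract A_i}y$ iff $x\preccurlyeq_\Psi y$ for all $i$. Thus every input TPO orders the pair $\{x,y\}$ exactly as $\preccurlyeq_\Psi$ does. The lower Pareto bound then forces $x\preccurlyeq_{\Psi\odiv S}y$ whenever $x\preccurlyeq_\Psi y$, and the upper bound forces the converse: if $x\preccurlyeq_{\Psi\odiv S}y$ then $x\preccurlyeq_{\Psi\contract A_i}y$ for some $i$, whence $x\preccurlyeq_\Psi y$. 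This gives the biconditional in both cases.

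The two remaining properties require a per-index case split, which I expect to be the crux. Fix $x\in\mods{\bigwedge\neg S}$, $y\notin\mods{\bigwedge\neg S}$ and $x\prec_\Psi y$ (resp.~$x\preccurlyeq_\Psi y$), and consider an arbitrary index $i$. If $y\in\mods{A_i}$, then $x\in\mods{\neg A_i}$ and $y\in\mods{A_i}$, so \CConR{3} (resp.~\CConR{4}) yields $x\prec_{\Psi\contract A_i}y$ (resp.~$x\preccurlyeq_{\Psi\contract A_i}y$). If instead $y\in\mods{\neg A_i}$, then both $x,y\in\mods{\neg A_i}$, so \CConR{1} transfers the prior ordering, again giving $x\prec_{\Psi\contract A_i}y$ (resp.~$x\preccurlyeq_{\Psi\contract A_i}y$). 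Either way the desired relation holds for \emph{every} $i$, so \SPU~(resp.~\WPU) delivers $x\prec_{\Psi\odiv S}y$ (resp.~$x\preccurlyeq_{\Psi\odiv S}y$), establishing \CConRn{3} and \CConRn{4}.

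The main obstacle is exactly this last case split: the hypothesis $y\notin\mods{\bigwedge\neg S}$ only guarantees $y\in\mods{A_i}$ for \emph{some} $i$, not all, so \CConR{3}/\CConR{4} cannot be applied uniformly. The key observation that rescues the argument is that at the indices where $y\in\mods{\neg A_i}$ we still have $x\in\mods{\neg A_i}$ (because $x$ falsifies every $A_i$), so \CConR{1} applies and propagates the strict (resp.~weak) prior comparison there too. It is this interplay---preservation across the $\neg A_i$-region via \CConR{1} together with separation of the regions via \CConR{3}/\CConR{4}---that supplies the unanimity across all inputs needed for the Pareto step to fire.
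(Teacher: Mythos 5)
Your proposal is correct and follows essentially the same route as the paper's own proof: transfer the prior comparisons to each $\preccurlyeq_{\Psi\contract A_i}$ via \CConR{1}/\CConR{2} (for the biconditionals) and via the per-index case split between \CConR{1} and \CConR{3}/\CConR{4} (for the strict/weak preservation claims), then conclude by the Pareto properties \SPU~and \WPU~of the aggregator. Your only cosmetic deviation is phrasing the left-to-right direction of \CConRn{1}/\CConRn{2} through the upper-bound form $\preccurlyeq_\oplus\,\subseteq\,\bigcup_{i\in I}\preccurlyeq_i$ rather than the paper's contrapositive appeal to \SPU, which is the same argument.
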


\noindent As a corollary of Theorem \ref{thm:PAR} we can provide a result, pertaining to STQ parallel contraction,  framed in terms of the concept of ``strong belief'', discussed in \cite{battigalli2002strong,stalnaker1996knowledge}:

\begin{definition}
$A$ is {\em strongly believed (s-believed)} in $\Psi$ iff (i) $A \in \bel{\Psi}$, and (ii) $A \in \bel{\Psi \ast B}$ for all sentences $B$ s.t. $A \wedge B$ is consistent.
\end{definition}

\noindent This result is the following:

\begin{restatable}{cor}{STQPackageContractionSound}
\label{cor:STQPackageContractionSound}
$\odiv$ is an STQ parallel contraction operator iff it is a TeamQueue parallel contraction operator that also satisfies: 

\begin{tabbing}
    \=BLAHBLAII\=\kill
\> \SBConBel   \>  If  $\neg B$ is s-believed in $\Psi\odiv S$, then \\
\> \> $ \bel{(\Psi\odiv S) \ast B} \subseteq \bigcap_{i\in I} \bel{(\Psi\contract A_i)\ast B}$ \\[-0.25em]
\end{tabbing}
\vspace{-1em}

\end{restatable}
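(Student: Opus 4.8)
The plan is to reduce the claim to the characterisation of $\STQ$ furnished by Theorem~\ref{thm:PAR}. Since every TeamQueue aggregator satisfies \FacPref, that theorem tells us that, among TeamQueue aggregators, $\STQ$ is singled out by the Parity constraint \PPAR, which by Proposition~\ref{prop:PARSB} is equivalent to its minimal-set form \PPARMin. Hence, given an operator $\odiv$ arising (as in Definition~\ref{def:TQPackageContraction}) from an AGM operator $\contract$ and a TeamQueue aggregator $\oplus$ via \MultiConAggregPrec, it suffices to show that $\oplus$ satisfies \PPARMin exactly when $\odiv$ satisfies \SBConBel. The whole argument is then an exercise in pushing \PPARMin through the semantic/syntactic dictionary, instantiating the set $S$ appearing in \PPARMin as $\mods{B}$ for the sentence $B$ of \SBConBel (every subset of the finite $W$ being of this form), and noting that here $\preccurlyeq_{\oplus}=\preccurlyeq_{\Psi\odiv S}$.

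First I would establish the semantic reading of strong belief: $\neg B$ is s-believed in a state $\Phi$ iff $\mods{\neg B}$ is a non-empty initial segment of $\preccurlyeq_\Phi$, i.e.~iff $\mods{\neg B}\neq\varnothing$ and $x\prec_\Phi y$ for all $x\in\mods{\neg B}$, $y\in\mods{B}$. The forward direction uses clause~(ii) of the definition of s-belief, testing it on a sentence whose models are any offending pair $\{x,y\}$; the backward direction is routine. Taking $\Phi=\Psi\odiv S$ and $S=\mods{B}$, this says that the antecedent of \SBConBel coincides with the antecedent ``$x\prec_{\oplus} y$ for all $x\in S^c$, $y\in S$'' of \PPARMin, modulo the single degenerate case discussed below.

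Next I would translate the consequents. Using $\mods{\bel{\Phi\ast B}}=\min(\preccurlyeq_\Phi,\mods{B})$, the identity $\mods{\bigcap_i K_i}=\bigcup_i\mods{K_i}$ for deductively closed $K_i$, and the anti-monotonicity of $\mods{\cdot}$, the inclusion $\bel{(\Psi\odiv S)\ast B}\subseteq\bigcap_{i\in I}\bel{(\Psi\contract A_i)\ast B}$ of \SBConBel becomes $\bigcup_{i\in I}\min(\preccurlyeq_{\Psi\contract A_i},\mods{B})\subseteq\min(\preccurlyeq_{\oplus},\mods{B})$, which is precisely the consequent of \PPARMin at $S=\mods{B}$. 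Combining the two translations shows that \SBConBel is equivalent to the family of instances of \PPARMin at all $S=\mods{B}$ with $S\neq W$. The forward direction of the corollary is then immediate, since an $\STQ$ operator satisfies \PPARMin in full and hence satisfies \SBConBel.

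The main obstacle is the converse, and it concentrates on the one instance of \PPARMin that the strong-belief translation does not reach: the degenerate case $S=W$ (i.e.~$B$ a tautology, where $\neg B=\bot$ is never s-believed, so \SBConBel is silent). Here \PPARMin demands $\bigcup_{i\in I}\min(\preccurlyeq_{\Psi\contract A_i},W)\subseteq\min(\preccurlyeq_{\Psi\odiv S},W)$, which is exactly the semantic form of \MultiConInter, and which holds precisely when the TeamQueue construction uses $a_{\mathbf{P}}(1)=I$ at the first step (as $\STQ$ does). The delicate point, then, is to confirm that \SBConBel, together with the TeamQueue structure of $\oplus$ restricted to parallel-contraction profiles — in which, because $\contract$ is AGM, $\min(\preccurlyeq_\Psi,W)\subseteq\min(\preccurlyeq_{\Psi\contract A_i},W)$ for every $i$, so all inputs share a common minimum — forces this first-step behaviour. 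Pinning down this reduction (rather than the routine dictionary bookkeeping) is the step I expect to require the most care, and it is where I would focus attention when writing the full proof.
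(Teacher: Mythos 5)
Your reduction and the dictionary work are exactly the paper's intended route: the paper gives no separate appendix proof of this corollary, deriving it from Theorem~\ref{thm:PAR} together with Proposition~\ref{prop:PARSB} and the usual correspondences ($\mods{\bel{\Phi\ast B}}=\min(\preccurlyeq_\Phi,\mods{B})$, $\mods{\bigcap_i K_i}=\bigcup_i\mods{K_i}$), and your semantic reading of strong belief, your translation of the consequent of \SBConBel~into that of \PPARMin~at $S=\mods{B}$, and your observation that the translation only reaches the instances with $S\neq W$ (since $\bot$ is never s-believed) are all correct. The forward direction is fine.

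But the step you defer---that \SBConBel, together with the TeamQueue structure of $\oplus$ on contraction-generated profiles (common minimum included), forces the first-step behaviour $a_{\mathbf{P}}(1)=I$---is not merely delicate: it fails, so the converse cannot be completed along the route you sketch. Concretely, let $W=\{x,y,z,w\}$, let $\preccurlyeq_\Psi$ be the chain $x\prec y\prec z\prec w$, let $\contract$ be natural (conservative) contraction, and let $\mods{\neg A_1}=\{z\}$, $\mods{\neg A_2}=\{w\}$, so that $\preccurlyeq_1:=\preccurlyeq_{\Psi\contract A_1}$ is $\{x,z\}\prec_1 y\prec_1 w$ and $\preccurlyeq_2:=\preccurlyeq_{\Psi\contract A_2}$ is $\{x,w\}\prec_2 y\prec_2 z$; both inputs share the minimum $x$. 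Let $\oplus$ agree with $\STQ$ on every profile except $\langle\preccurlyeq_1,\preccurlyeq_2\rangle$, where it uses $a(1)=\{1\}$ and $a(i)=\{1,2\}$ thereafter; this is a legitimate TeamQueue aggregator, since the sequences $a_{\mathbf{P}}$ may vary freely with the profile and the paper's footnote explicitly drops the requirement $a_{\mathbf{P}}(1)=I$. Its output is $\{x,z\}\prec_\oplus\{y,w\}$, whereas $\STQ$ yields $\{x,z,w\}\prec_{\STQ}\{y\}$. On this profile the antecedent of \SBConBel~triggers only at $\mods{\neg B}=\{x,z\}$ (besides the trivial $B=\bot$), and there $\bigcup_i\min(\preccurlyeq_i,\{y,w\})=\{y\}\cup\{w\}=\min(\preccurlyeq_\oplus,\{y,w\})$, so \SBConBel~holds in full; yet the induced $\odiv$ is not an STQ parallel contraction operator, because every TeamQueue aggregator is the identity on one-element profiles, so singleton inputs force any witnessing $\contract'$ to coincide with $\contract$ at TPO level, and the two outputs above differ. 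The only violated instance of \PPARMin~is $S=W$ ($\{x,z,w\}\not\subseteq\{x,z\}$)---precisely the instance invisible to \SBConBel. So the missing case must be added by hand rather than derived: the biconditional becomes correct if the right-hand side also includes the $B=\top$ limiting instance $\bel{\Psi\odiv S}\subseteq\bigcap_{i\in I}\bel{\Psi\contract A_i}$ (the nontrivial half of \MultiConInter; the converse inclusion is \UBOConBel~at $B=\top$), or equivalently if Definition~\ref{def:TQPackageContraction} is read with the constraint $a_{\mathbf{P}}(1)=I$ mentioned just before it. Your common-minimum idea does not rescue the step: in the example the inputs share the minimum $x$, and the output still drops $w$ from its bottom rank without \SBConBel~detecting it.
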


\noindent Last but not least, Corollary \ref{cor:Rat}  translates into the following, connecting STQ iterated parallel contraction with rational closure:

\begin{restatable}{cor}{RatCon}
\label{cor:RatCon}
$\odiv$ is an STQ parallel contraction operator iff the following equality holds:  $\cbel{\Psi\odiv\{A_1,\ldots, A_n\}}= \CRat(\bigcap_i \cbel{\Psi\contract A_i})$
\end{restatable}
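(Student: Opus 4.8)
The plan is to obtain this corollary as a near-immediate translation of Corollary \ref{cor:Rat} from the language of TPO aggregation into the language of parallel contraction. The bridge is the correspondence recalled in Section \ref{ss:SSSC} between the TPO $\preccurlyeq_\Psi$ associated with a belief state and its conditional belief set, which gives the identity $\cbel{\Psi} = \cbel{\preccurlyeq_\Psi}$ (and in particular $\cbel{\Psi \contract A_i} = \cbel{\preccurlyeq_{\Psi \contract A_i}}$ for each $i$). Crucially, that representation theorem also tells us that a rational conditional belief set determines its generating TPO uniquely, so the map $\preccurlyeq \mapsto \cbel{\preccurlyeq}$ on TPOs is injective; this injectivity is what I will lean on in the converse direction.

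For the left-to-right direction, I would assume $\odiv$ is an STQ parallel contraction operator, so that it satisfies \MultiConAggregPrec with $\oplus = \STQ$ for the witnessing AGM contraction $\contract$, i.e. $\preccurlyeq_{\Psi \odiv \{A_1,\ldots,A_n\}} = \STQ\langle \preccurlyeq_{\Psi \contract A_1}, \ldots, \preccurlyeq_{\Psi \contract A_n}\rangle$. Applying $\cbel{\cdot}$ to both sides and invoking Corollary \ref{cor:Rat} on the right-hand profile yields $\cbel{\Psi \odiv \{A_1,\ldots,A_n\}} = \CRat(\bigcap_i \cbel{\preccurlyeq_{\Psi \contract A_i}})$, which is the desired equality once each $\cbel{\preccurlyeq_{\Psi \contract A_i}}$ is rewritten as $\cbel{\Psi \contract A_i}$ (relative to the same $\contract$).

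For the converse, I would assume the stated equality and read Corollary \ref{cor:Rat} in the other direction: its right-hand side $\CRat(\bigcap_i \cbel{\Psi \contract A_i})$ equals $\cbel{\preccurlyeq_{\STQ}}$, the conditional belief set of the STQ aggregation of the profile $\langle \preccurlyeq_{\Psi \contract A_1}, \ldots, \preccurlyeq_{\Psi \contract A_n}\rangle$. Hence $\cbel{\Psi \odiv \{A_1,\ldots,A_n\}} = \cbel{\STQ\langle \preccurlyeq_{\Psi \contract A_1}, \ldots, \preccurlyeq_{\Psi \contract A_n}\rangle}$. Since both sides are now conditional belief sets of genuine TPOs, injectivity of $\preccurlyeq \mapsto \cbel{\preccurlyeq}$ licenses passing to the underlying TPO identity $\preccurlyeq_{\Psi \odiv \{A_1,\ldots,A_n\}} = \STQ\langle \preccurlyeq_{\Psi \contract A_1}, \ldots, \preccurlyeq_{\Psi \contract A_n}\rangle$, which is exactly \MultiConAggregPrec instantiated with $\oplus = \STQ$; by Definition \ref{def:TQPackageContraction} and its STQ specialisation, $\odiv$ is then an STQ parallel contraction operator.

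The one point I would be careful about is precisely the use of injectivity in the converse: it is what allows equality of conditional belief sets to be pushed back to equality of the generating TPOs, and it relies on two facts working together, namely that rational closure always returns a rational conditional belief set (so the right-hand side really does correspond to a TPO) and that such a rational set fixes its TPO uniquely. Both are supplied by the machinery of Section \ref{ss:SSSC} and by Corollary \ref{cor:Rat} itself, which already exhibits the right-hand side as $\cbel{\preccurlyeq_{\STQ}}$. Everything else is a mechanical substitution of $\cbel{\Psi} = \cbel{\preccurlyeq_\Psi}$ into Corollary \ref{cor:Rat}, so I do not anticipate any genuinely hard step beyond stating the correspondence cleanly.
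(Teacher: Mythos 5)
Your proposal is correct and matches the paper's intent exactly: the paper offers no separate proof of this corollary, presenting it as an immediate translation of Corollary \ref{cor:Rat} via the correspondence $\cbel{\Psi} = \cbel{\preccurlyeq_\Psi}$, which is precisely your forward direction. Your explicit treatment of the converse---using that $\CRat$ always outputs a rational conditional belief set and that the map $\preccurlyeq \mapsto \cbel{\preccurlyeq}$ is injective, so equality of conditional belief sets lifts to the TPO identity required by \MultiConAggregPrec~with $\oplus = \STQ$---is a welcome piece of rigour that the paper leaves implicit, and it is exactly the right justification.
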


\section{Concluding comments}
\label{sec:Concl}

In this paper, we have proposed an original approach to the neglected issue of parallel belief contraction, based on the generalisation of a largely unexplored family of methods for TPO aggregation.

The method generalises to the iterated case the ``intersective'' approach to single-step parallel contraction, which we have demonstrated can derive (i) Furhmann and Hansson's parallel versions of the basic AGM postulates for serial contraction and (ii) a pair of new plausible generalisations of the relevant supplementary postulates.

While explicitly regulating {\em two}-step parallel change, the approach allows handling indefinitely many parallel contractions when used with serial contraction operators that identify epistemic states with TPOs, such as moderate or priority contraction operators.  For models using {\em richer} structures than TPOs, such as ordinal intervals \cite{DBLP:journals/ai/BoothC20} or ranking functions (see \cite{Spohn2009-SPOASO} for an overview, though note that Spohn's proposal does not involve aggregation), a parallel suggestion would require a suitable  adaptation of the aggregation method.

Looking beyond contraction, it would be valuable to investigate whether the TQ approach could be applied to iterated parallel {\em revision}. This topic remains under-explored, with the only significant work being \cite{10.1007/978-3-540-24609-1_27} and \cite{DelgrandeJames2012PbrR} (\cite{resinamultiplesurvey} survey work on the single-step case).

Finally, there may be applications of TQ aggregation beyond belief revision, as the aggregation of orderings appears in multiple areas. One might consider whether TQ aggregation could show promise in preference or judgment aggregation, as a method for aggregating conditional judgments, preference aggregation, or judgments regarding comparative magnitudes. However, TQ aggregation as presented here would need generalisation for such tasks, as it is currently insensitive to TPO duplication in the profile, meaning profiles with identical members yield the same output. While this property suits parallel iterated belief change, it may not suit these other domains.

\bibliographystyle{splncs04}
\bibliography{CHANDLER_biblio}

\begin{thebibliography}{10}
\providecommand{\url}[1]{\texttt{#1}}
\providecommand{\urlprefix}{URL }
\providecommand{\doi}[1]{https://doi.org/#1}

\bibitem{alchourron1985logic}
Alchourr{\'o}n, C.E., G{\"a}rdenfors, P., Makinson, D.: On the logic of theory
  change: Partial meet contraction and revision functions. Journal of Symbolic
  Logic  \textbf{50}(02),  510--530 (1985)

\bibitem{battigalli2002strong}
Battigalli, P., Siniscalchi, M.: Strong belief and forward induction reasoning.
  {J}ournal of {E}conomic {T}heory  \textbf{106}(2),  356--391 (2002)

\bibitem{Booth1998-BOOANO}
Booth, R., Paris, J.B.: A note on the rational closure of knowledge bases with
  both positive and negative knowledge. {J}ournal of {L}ogic, {L}anguage and
  {I}nformation  \textbf{7}(2),  165--190 (1998)

\bibitem{DBLP:journals/jphil/BoothC17}
Booth, R., Chandler, J.: The irreducibility of iterated to single revision.
  Journal of Philosophical Logic  \textbf{46}(4),  405--418 (2017)

\bibitem{DBLP:journals/ai/BoothC19}
Booth, R., Chandler, J.: From iterated revision to iterated contraction:
  Extending the {H}arper identity. Artificial intelligence  \textbf{277} (2019)

\bibitem{DBLP:journals/ai/BoothC20}
Booth, R., Chandler, J.: On strengthening the logic of iterated belief
  revision: Proper ordinal interval operators. Artificial intelligence
  \textbf{285} (2020)

\bibitem{10.1016/j.ijar.2016.06.010}
Caridroit, T., Konieczny, S., Marquis, P.: Contraction in propositional logic.
  International Journal of Approximate Reasoning  \textbf{80}(C),  428--442
  (Jan 2017). \doi{10.1016/j.ijar.2016.06.010}

\bibitem{ChandlerSpohn}
Chandler, J.: Review of {W}olfgang {S}pohn's {T}he {L}aws of {B}elief:
  {R}anking {T}heory and its {P}hilosophical {A}pplications, {O}xford: {O}xford
  {U}niversity {P}ress, 2012. Dialectica  \textbf{71}(1),  141--146 (2017)

\bibitem{DBLP:journals/jphil/ChandlerB23}
Chandler, J., Booth, R.: Elementary belief revision operators. Journal of
  Philosophical Logic  \textbf{52}(1),  267--311 (2023).
  \doi{10.1007/s10992-022-09672-6}

\bibitem{chopra2008iterated}
Chopra, S., Ghose, A., Meyer, T., Wong, K.S.: Iterated belief change and the
  {R}ecovery axiom. Journal of Philosophical Logic  \textbf{37}(5),  501--520
  (2008)

\bibitem{darwiche1997logic}
Darwiche, A., Pearl, J.: On the logic of iterated belief revision. Artificial
  Intelligence  \textbf{89}(1),  1--29 (1997)

\bibitem{DelgrandeJames2012PbrR}
Delgrande, J., Jin, Y.: Parallel belief revision: Revising by sets of formulas.
  Artificial intelligence  \textbf{176}(1),  2223--2245 (2012)

\bibitem{DBLP:journals/jphil/FermeR12}
Ferm{\'{e}}, E., Reis, M.D.L.: System of spheres-based multiple contractions.
  Journal of Philosophical Logic  \textbf{41}(1),  29--52 (2012).
  \doi{10.1007/s10992-011-9197-z}

\bibitem{DBLP:journals/rsl/FermeR13}
Ferm{\'{e}}, E., Reis, M.D.L.: Epistemic entrenchment-based multiple
  contractions. The Review of Symbolic Logic  \textbf{6}(3),  460--487 (2013).
  \doi{10.1017/S1755020313000105}

\bibitem{DBLP:journals/sLogica/FermeSS03}
Ferm{\'{e}}, E.L., Saez, K., Sanz, P.: Multiple kernel contraction. Studia
  Logica  \textbf{73}(2),  183--195 (2003). \doi{10.1023/A:1022927828817}

\bibitem{Fuhrmann1996-FUHAEO}
Fuhrmann, A.: An Essay on Contraction. Center for the Study of Language and Inf
  (1996)

\bibitem{FurHanSMC}
Fuhrmann, A., Hansson, S.O.: A survey of multiple contractions. Journal of
  Logic, Language and Information  \textbf{3}(1),  39--75 (1994)

\bibitem{https://doi.org/10.1111/j.1755-2567.1989.tb00725.x}
Hansson, S.O.: New operators for theory change. Theoria  \textbf{55}(2),
  114--132 (1989). \doi{https://doi.org/10.1111/j.1755-2567.1989.tb00725.x}

\bibitem{Hansson1992aa}
Hansson, S.O.: In defense of base contraction. Synthese  \textbf{91}(3),
  239--245 (1992). \doi{10.1007/BF00413568}

\bibitem{HANSSONS.O1993RtLI}
Hansson, S.O.: Reversing the {L}evi identity. Journal of Philosophical Logic
  \textbf{22}(6),  637--669 (1993)

\bibitem{KCH}
Hansson, S.O.: Kernel contraction. The Journal of Symbolic Logic
  \textbf{59}(3),  845--859 (1994), \url{http://www.jstor.org/stable/2275912}

\bibitem{harper1976rational}
Harper, W.L.: Rational conceptual change. In: PSA: Proceedings of the Biennial
  Meeting of the Philosophy of Science Association. pp. 462--494. JSTOR (1976)

\bibitem{lehmann1992does}
Lehmann, D., Magidor, M.: What does a conditional knowledge base entail?
  Artificial intelligence  \textbf{55}(1),  1--60 (1992)

\bibitem{10.1007/BFb0018421}
Makinson, D., G{\"a}rdenfors, P.: Relations between the logic of theory change
  and nonmonotonic logic. In: Fuhrmann, A., Morreau, M. (eds.) {T}he {L}ogic of
  {T}heory {C}hange. pp. 183--205. {S}pringer {B}erlin {H}eidelberg, {B}erlin,
  {H}eidelberg (1991)

\bibitem{nayak2007iterated}
Nayak, A.C., Goebel, R., Orgun, M.A.: Iterated belief contraction from first
  principles. In: {P}roceedings of the 20th {I}nternational {J}oint
  {C}onference on {A}rtificial {I}ntelligence ({IJCAI}-07), {H}yderabad,
  {I}ndia, {J}anuary 6--12 2007. pp. 2568--2573 (2007)

\bibitem{DBLP:journals/jphil/ReisF12}
Reis, M.D.L., Ferm{\'{e}}, E.: Possible worlds semantics for partial meet
  multiple contraction. Journal of Philosophical Logic  \textbf{41}(1),  7--28
  (2012). \doi{10.1007/s10992-011-9198-y}

\bibitem{DBLP:journals/amai/ReisPF16}
Reis, M.D.L., Peppas, P., Ferm{\'{e}}, E.: Two axiomatic characterizations for
  the system of spheres-based (and the epistemic entrenchment-based) multiple
  contractions. Annals of Mathematics and Artificial Intelligence
  \textbf{78}(3-4),  181--203 (2016). \doi{10.1007/s10472-015-9454-x}

\bibitem{resinamultiplesurvey}
Resina, F., Wassermann, R.: A survey on multiple revision. In: Proceedings of
  NMR 2020 - 18th International Workshop on Non-Monotonic Reasoning (2020)

\bibitem{Spohn2009-SPOASO}
Spohn, W.: A survey of ranking theory. In: Huber, F., Schmidt{-}Petri, C.
  (eds.) Degrees of Belief. Springer (2009)

\bibitem{SpohnPC}
Spohn, W.: Multiple contraction revisited. EPSA Epistemology and Methodology of
  Science: Launch of the European Philosophy of Science Association pp.
  279--288 (2010)

\bibitem{stalnaker1996knowledge}
Stalnaker, R.: Knowledge, belief and counterfactual reasoning in games.
  Economics and Philosophy  \textbf{12}(02),  133--163 (1996)

\bibitem{10.1007/978-3-540-24609-1_27}
Zhang, D.: Properties of iterated multiple belief revision. In: Lifschitz, V.,
  Niemel{\"a}, I. (eds.) Logic Programming and Nonmonotonic Reasoning. pp.
  314--325. Springer Berlin Heidelberg, Berlin, Heidelberg (2004)

\end{thebibliography}


\section*{Appendix: proofs}\label{s:appendix}

\RedtoAGMPOneSix*

\begin{pproof}
Regarding \KConP{1}: By \MultiConInter~and \KCon{1}, we know that $[\Psi\odiv  S ]$ is the intersection of a set of deductively closed sets, which is well-known to itself be deductively closed.

Regarding \KConP{2}: Assume  $C\in[\Psi\odiv  S ]$. We need to show that $C\in [\Psi]$. By \MultiConInter,  $C\in\bigcap_{1\leq i\leq n}[\Psi\contract A_i]$. By \KCon{2}, we then recover $C\in [\Psi]$.

Regarding \KConP{3}: Assume $ S \cap[\Psi]=\varnothing$. By \MultiConInter, we simply need to show that $\bigcap_{1\leq i\leq n}[\Psi\contract A_i]=[\Psi]$. From $ S \cap[\Psi]=\varnothing$, we have, for all $i$ such that $1\leq i\leq n$, $A_i\notin \bel{\Psi}$. By \KCon{3}, it then follows that, for all $i$ such that $1\leq i\leq n$, $[\Psi\contract A_i]=[\Psi]$. Hence $\bigcap_{1\leq i\leq n}[\Psi\contract A_i]=[\Psi]$, as required.

Regarding \KConP{4}: Assume that $A_i\in S- \Cn(\varnothing)$. By \MultiConInter, we need to show that $A_i\notin\bigcap_{1\leq i\leq n}[\Psi\contract A_i]$. This immediately follows from \KCon{4}, which ensures that $A_i\notin[\Psi\contract A_i]$.

Regarding  \KConP{5}: Assume $ S \subseteq[\Psi]$. By \MultiConInter, we need to show that $[\Psi]\subseteq\Cn(\bigcap_{1\leq i\leq n}[\Psi\contract A_i]\cup\{  S \})$. By \KCon{5}, we have, for all $i$ such that $1\leq i\leq n$,   $[\Psi]\subseteq\Cn([\Psi\contract A_i]\cup\{ A_i\})$.  Let $B\in L$ be such that $\bel{\Psi} = \Cn(B)$ (by the finiteness of $L$, we know that such $B$ exists). By the Deduction Theorem for $\Cn$, for all $i$ such that $1\leq i\leq n$,   we have $A_i\rightarrow B \in [\Psi\contract A_i]$. By \KCon{1},  for all $i$ such that $1\leq i\leq n$,   we then have $\bigwedge S\rightarrow B \in [\Psi\contract A_i]$ and hence $\bigwedge S\rightarrow B \in \bigcap_{1\leq i\leq n}[\Psi\contract A_i]$. It then follows from this that $B \in \Cn(\bigcap_{1\leq i\leq n}[\Psi\contract A_i]\cup\{  S \})$ and hence that $[\Psi]\subseteq\Cn(\bigcap_{1\leq i\leq n}[\Psi\contract A_i]\cup\{  S \})$, as required. 

Regarding \KConP{6}: Assume that $\forall A^1\in S ^1$, $\exists A^2\in S ^2$ such that $\Cn(A^1)=\Cn(A^2)$, and vice versa.  By \MultiConInter, we need to show that $\bigcap_{1\leq i\leq n}[\Psi\contract A^1_i]=\bigcap_{1\leq i\leq m}[\Psi\contract A^2_i]$. By \KCon{6}, we know that,  $\forall A^1\in S ^1$, $\exists A^2\in S ^2$ such that   $[\Psi\contract A^1]=[\Psi\contract A^2]$. From this, it follows that $\bigcap_{1\leq i\leq n}[\Psi\contract A^1_i]\supseteq\bigcap_{1\leq i\leq m}[\Psi\contract A^2_i]$. But by the same postulate, we also know that $\forall A^2\in S ^2$, $\exists A^1\in S ^1$ such that   $[\Psi\contract A^2]=[\Psi\contract A^1]$. The converse inclusion then holds and we recover the required result. 
\end{pproof}

\vspace{1em}


\RedtoAGMPSevEightFH*

\begin{pproof}
Regarding (a): On the one hand, \MultiConInter~entails the following Monotonicity principle:  If $ S _1\subseteq S _2$, then $[\Psi \odiv  S _2]\subseteq[\Psi \odiv  S _1]$. It follows from this that  $[\Psi \odiv  (S _1\cup  S _2)]\subseteq[\Psi\odiv  (S _1\cap  S _2)]$. On the other hand, \MultiConInter~also entails $[\Psi\odiv  S _1]\cap[\Psi\odiv  S _2]\subseteq [\Psi\odiv  (S _1\cup  S _2)]$. From these two implications, we recover $[\Psi\odiv  S _1]\cap[\Psi\odiv  S _2]\subseteq [\Psi\odiv  (S _1\cap  S _2)]$, as required.

Regarding (b): This countermodel has the structure of the situation depicted in Example \ref{ex:FandHKEight}. Let $W= \{x, y, z, w\}$,  $\mods{A\wedge B} = x$, $\mods{A\wedge \neg B} = y$, $\mods{\neg A\wedge B} = z$, and $\mods{\neg A\wedge \neg B} = w$.  Let $S_1 = \{A\}$ and $S_2=\{B\}$. Assume that $\contract$ satisfies \KCon{1}-\KCon{8} and let the TPO $\preccurlyeq_\Psi$ associated with $\Psi$ be given by $x\prec_{\Psi}\{y, z, w\}$. Then, $A\notin\bel{\Psi\odiv \{B\}}$, $B\rightarrow A\in\bel{\Psi\odiv \{B\}}$ but $B\rightarrow A\notin\bel{\Psi\odiv \{A, B\}}$ 
\end{pproof}

\vspace{1em}


\RedtoAGMPSevenEight* 

\begin{pproof}
Regarding (i): By \MultiConInter, we know that $\bel{\Psi\odiv S_1}\cap\bel{\Psi\odiv S_1} = \bel{\Psi\odiv (S_1\cup S_2)}$. It follows that establishing  \KConP{7} is equivalent to showing $\bel{\Psi\odiv S} \subseteq \bel{\Psi\odiv \{\bigwedge S\}} $. By \MultiConInter, this is equivalent to $\bigcap_{A\in S}\bel{\Psi\contract A} \subseteq \bel{\Psi\contract \bigwedge S} $. But this follows by repeated applications of \KCon{7}.

Regarding (ii): Suppose  $S_1\cap\bel{\Psi\odiv\{\bigwedge (S_1\cup S_2)\}}=\varnothing$, i.e.~$S_1\cap\bel{\Psi\contract\bigwedge (S_1\cup S_2)}=\varnothing$. We must show $\bel{\Psi\odiv\{\bigwedge (S_1\cup S_2)\}}\subseteq \bel{\Psi\odiv S_1}$, i.e. by \MultiConInter,  $\bel{\Psi\contract\bigwedge (S_1\cup S_2)}\subseteq \bel{\Psi\contract A}$ for all $A\in S_1$. So let $A\in S_1$. Since $S_1\cap\bel{\Psi\contract\bigwedge (S_1\cup S_2)}=\varnothing$, we have $A\notin \bel{\Psi\contract\bigwedge (S_1\cup S_2)}$. We then recover $\bel{\Psi\contract\bigwedge (S_1\cup S_2)}\subseteq \bel{\Psi\contract A}$, by \KCon{8}, as required.
\end{pproof}

\vspace{1em}


\NaryRep*

\begin{pproof}
From postulate to construction: Assume that $\oplus$ satisfies \FacMin. We must specify, for each TPO profile  $\mathbf{P}=\langle\preccurlyeq_1,\ldots,\preccurlyeq_n\rangle$, a sequence $\langle a_{\mathbf{P}} (i)\rangle_{i\in\mathbb{N}}$ such that:

\begin{itemize}

\item[(1)] $\emptyset \neq a_{\mathbf{P}}(i) \subseteq I$ for each $i\in\mathbb{N}$ 

\item[(2)] $\oplus=\oplus_a$ 

\end{itemize}
We specify $a_{\mathbf{P}}(i)$ as follows.  Assuming $\oplus\mathbf{P}$ is represented by the ordered partition $\langle S_1,\ldots,S_m \rangle $, we have

\vspace{1em}

\begin{centering}

$a_{\mathbf{P}}(i) = \{ j\in I\mid \min(\preccurlyeq_j, \bigcap_{k<i}S^c_k)\subseteq S_i\} $

\end{centering}

\vspace{1em}
 
\noindent We prove each of (1) and (2) in turn.
\begin{itemize}

\item[(1)] By construction, $a_{\mathbf{P}}(i) \subseteq I$. So we simply need to show $\emptyset \neq a_{\mathbf{P}}(i) $. By the definition of  $\langle S_1,\ldots,S_m \rangle $ we know that $S_i=\min(\preccurlyeq_{\oplus}, \bigcap_{k<i} S^c_k)$. Then, by \FacMin, $S_i=\bigcup_{j\in X}\min(\preccurlyeq_j, \bigcap_{k<i} S^c_k)$ for some $X\subseteq I$. Since $S_i\neq\emptyset$ and hence $X\neq\emptyset$, we know that $S_i$ contains $\min(\preccurlyeq_j, \bigcap_{k<i} S^c_k)$ for at least one $j$. So $\emptyset \neq a_{\mathbf{P}}(i)$, as required.

\item[(2)] Let $\mathbf{P}=\langle\preccurlyeq_1,\ldots,\preccurlyeq_n\rangle$ be a given profile, and assume $\langle T_1,\ldots,T_l \rangle $ is the ordered partition representing  $\oplus_a(\mathbf{P})$ so, for each $i=1,\ldots,l$,

\vspace{1em}

\begin{centering}

$T_i=\bigcup_{j\in a_{\mathbf{P}}(i)}\min(\preccurlyeq_j, \bigcap_{k<i} T^c_k)$ 

\end{centering}

\vspace{1em}

We show by induction on $i$ that $T_i=S_i$ for all $i$. Fix $i$ and assume, for induction, $T_k=S_k$ for all $k<i$. 
\begin{itemize}

\item[-] $T_i\subseteq S_i$: By the inductive hypothesis we know $T_i=\bigcup_{j\in a_{\mathbf{P}}(i)}\min(\preccurlyeq_j, \bigcap_{k<i} S^c_k)$ and by construction $\min(\preccurlyeq_j, \bigcap_{k<i} S^c_k)\subseteq S_i$ for all $ j\in a_{\mathbf{P}}(i)$. Thus $T_i\subseteq S_i$,  as required.

\item[-] $S_i\subseteq T_i$: By the definition of  $\langle S_1,\ldots,S_m \rangle $ we know that $S_i=\min(\preccurlyeq_{\oplus}, \bigcap_{k<i} S^c_k)$. By \FacMin, there exists $X\subseteq I$ such that $S_i  = \bigcup_{j\in X} \min(\preccurlyeq_j, \bigcap_{k<i} S^c_k )$. For each $j\in X$, $\min(\preccurlyeq_j, \bigcap_{k<i} T^c_k)\subseteq S_i$ and so $ j\in a_{\mathbf{P}}(i)$.  
Thus $X\subseteq a_{\mathbf{P}}(i)$. So $S_i\subseteq \bigcup_{j\in a_{\mathbf{P}}(i)}\min(\preccurlyeq_j, \bigcap_{k<i} S^c_k)$. Then, by the inductive hypothesis, $S_i\subseteq \bigcup_{j\in a_{\mathbf{P}}(i)}\min(\preccurlyeq_j, \bigcap_{k<i} T^c_k)$, i.e.~$S_i\subseteq T_i$, as required.
\end{itemize}

\end{itemize}
\noindent From construction to postulate: Let $\oplus=\oplus_a$ for some given $a$. To show \FacMin, by Proposition \ref{prop:FSyntSem}, it suffices to show:

\begin{tabbing}
\=BLAHBLAI\= ooooo \= \kill
\> \FacPref \> Assume that $x_1$ to $x_n$ are s.t.~$x_i \preccurlyeq_i y$. Then  there\\
\> \>  exists  $j\in I$ s.t. \>  \\[0.1cm]
\>  \>  (i)  \> if  $x_j \prec_j y$, then  $x_j \prec_\oplus y$, and \\[0.1cm]
\> \>  (ii)  \> if  $x_j \preccurlyeq_j y$, then  $x_j \preccurlyeq_\oplus y$\\[-0.25em]
\end{tabbing}

\noindent Assume that $x_i\preccurlyeq_i y$, for all $i$ and suppose, for contradiction, that, for all $i$, either (i) $x_i \prec_i y$ and  $y  \preccurlyeq_{\oplus_a} x_i$ or (ii) $x_i \preccurlyeq_i y$ and $y  \prec_{\oplus_a} x_i$. Then we must have   $y\preccurlyeq_{\oplus_a} x_i$, for all $i$. Let $\langle T_1,\ldots,T_m \rangle $ be the ordered partition representing  $\oplus_a(\mathbf{P})$ and $j$ be such that $y\in T_j$. By the definition of $\oplus_a$, we know that 
$T_j=\bigcup_{l\in a_{\mathbf{P}}(j)}\min(\preccurlyeq_l, \bigcap_{k<l} T^c_k)$ and so $y\in \min(\preccurlyeq_l, \bigcap_{k<l} T^c_k)$ for some $l\in a_{\mathbf{P}}(j) $.

Since $y\preccurlyeq_{\oplus_a} x_i$ for all $i$, we know $x_i\in \bigcap_{k<j} T^c_k$, for all $i$, and so, in particular, $x_l\in \bigcap_{k<j} T^c_k$. Therefore, by the minimality of $y$, we have $y \preccurlyeq_l x_l$. Since we assumed  $x_i\preccurlyeq_i y$, for all $i$, we must then have $y\sim_l x_l$ and so also $x_l\in \min(\preccurlyeq_l, \bigcap_{k<l} T^c_k)$. Hence $x_l \in T_j$. Since by assumption $y\in T_j$, we therefore have $y\sim_{\oplus_a} x_l$. However, $y\sim_{\oplus_a} x_l$ and $y\sim_l x_l$ jointly contradict the assumption that , for all $i$, either (i) $x_i \prec_i y$ and  $y  \preccurlyeq_{\oplus_a} x_i$ or (ii) $x_i \preccurlyeq_i y$ and $y  \prec_{\oplus_a} x_i$.
\end{pproof}

\vspace{1em}


\FSyntSem*

\begin{pproof}
From \FacPref~to \FacMin: Let $X= \{j\in I\mid \min(\preccurlyeq_j, S)\subseteq\min(\preccurlyeq_\oplus, S)\}$. Claim: $\min(\preccurlyeq_\oplus, S)=\bigcup_{j\in X}\min(\preccurlyeq_j, S)$.  $\min(\preccurlyeq_\oplus, S)\supseteq\bigcup_{j\in X}\min(\preccurlyeq_j, S)$ holds trivially by the definition of $X$, so we just need to establish $\min(\preccurlyeq_\oplus, S)\subseteq\bigcup_{j\in X}\min(\preccurlyeq_j, S)$. Assume that $y\in \min(\preccurlyeq_\oplus, S)$. Assume for contradiction that $y\notin\bigcup_{j\in X} \min(\preccurlyeq_j, S)$. We will show that, for all $j\in I$, there exists $x_j$ such that 
$x_j\preccurlyeq_j y$ but either (i) $x_j \prec_j y$ and  $y  \preccurlyeq_{\oplus} x_j$ or (ii) $x_j \preccurlyeq_j y$ and $y  \prec_{\oplus} x_j$,
contradicting \FacPref~and hence allowing us to conclude $y\in\bigcup_{j\in X} \min(\preccurlyeq_j, S)$, as required.
\begin{itemize}

\item[-] $j\in X$:  From $y\notin\bigcup_{j\in X} \min(\preccurlyeq_j, S)$, we have the fact that, for all $j\in X$, $y\notin \min(\preccurlyeq_j, S)$. Then, for each $j\in X$, there exists $x_j\in S$ such that $x_j\prec_j y$ and, since $y\in \min(\preccurlyeq_\oplus, S)$,  $y\preccurlyeq_\oplus x_j$.

\item[-] $j\notin X$: By the definition of $X$, for each $j\notin X$, there exists $x_j\in S$, such that $x_j\in \min(\preccurlyeq_j, S)-\min(\preccurlyeq_\oplus, S)$, hence $x_j\preccurlyeq_j y$ and, furthermore, since $y\in \min(\preccurlyeq_\oplus, S)$, we also have $ y\prec_\oplus x_j$. 

\end{itemize}
From \FacMin~to \FacPref: Suppose \FacMin~holds, but, for contradiction, \FacPref~does not. Then there exists a set $S=\{x_i\mid i\in I\}\cup\{y\}$ such that
$x_i\preccurlyeq_i y$ but either (i) $x_i \prec_i y$ and  $y  \preccurlyeq_{\oplus} x_i$ or (ii) $x_i \preccurlyeq_i y$ and $y  \prec_{\oplus} x_i$, for all $i$. 
Since, for each $i$, we have $y\preccurlyeq_\oplus x_i$, this means that $y\in \min(\preccurlyeq_\oplus, S)$. By \FacMin, there exists $X\subseteq I$ such that $\min(\preccurlyeq_{\oplus},  S )= \bigcup_{j\in X} \min(\preccurlyeq_j,  S )$. It then follows that $y\in\min(\preccurlyeq_j, S)$ for some $j\in X$. So $y\preccurlyeq_j x_j$. But we know that $x_j\preccurlyeq_j y$, so $x_j\sim_j y$ and $x_j\in\min(\preccurlyeq_j, S)$, and so, since $\min(\preccurlyeq_{\oplus},  S )= \bigcup_{j\in X} \min(\preccurlyeq_j,  S )$, we have $x_j\in \min(\preccurlyeq_\oplus, S)$. But from  $x_j\sim_j y$ and either  (i) $x_j \prec_j y$ and  $y  \preccurlyeq_{\oplus} x_j$ or (ii) $x_j \preccurlyeq_j y$ and $y  \prec_{\oplus} x_j$, we must have  $y\prec_\oplus x_j$, contradicting $x_j\in\min(\preccurlyeq_\oplus, S)$. Hence \FacPref~holds, as required.   
\end{pproof}

\vspace{1em}


\PAR*

\begin{pproof} 
In fact, \FacPref~is not required in its full strength for the result. Only a particular consequence of it is needed 

\begin{tabbing}
    \=BLAHBLAII\=\kill
\>   \SPUPlus  \> Assume that $x_1$ to $x_n$ are s.t.~$x_i \prec_i y$. Then\\
\> \>   there  exists $j\in I$ s.t.~$x_j \prec_\oplus y$  \\[-0.25em]
\end{tabbing}
\vspace{-1em}

\noindent (See proof of Proposition  \ref{prop:SemBreakdown} below for the derivation of \SPUPlus.) 

We need to show that if $\oplus$ satisfies \SPUPlus~and \PPAR, then we have $\preccurlyeq_{\oplus}=\preccurlyeq_{\STQ}$. Assume that $\preccurlyeq_{\oplus}$ and $\preccurlyeq_{\STQ}$ are respectively  represented by $\langle S_1, S_2,\ldots, S_m \rangle$ and $\langle T_1, T_2,\ldots, T_n \rangle$. We will prove, by induction on $i$, that, $\forall i$,  $S_i=T_i$. Assume $S_j=T_j$, $\forall j < i$. We must show $S_i=T_i$.

\begin{itemize}

\item[(i)] Regarding $S_i\subseteq T_i$: Let $x\in S_i$, so that $x\preccurlyeq_{\oplus} y$, $\forall y\in \bigcap_{j<i} S^{\mathsf{c}}_j$. Assume for reductio that $x\notin T_i$. Since $x\in S_i$, we know that $x\in\bigcap_{j<i} S^{\mathsf{c}}_j=\bigcap_{j<i} T^{\mathsf{c}}_j$. Hence, since $x\notin T_i$ and, by construction of $\preccurlyeq_{\STQ}$, for all $k\in I$, there exists $y_k\in \bigcap_{j<i} T^{\mathsf{c}}_j=\bigcap_{j<i} S^{\mathsf{c}}_j$ such that $y_k \prec_{k} x$. Then, by \SPUPlus, there exists $l$ such that  $y_l \prec_{\oplus} x$,  contradicting  $x\preccurlyeq_{\oplus} y$, $\forall y\in \bigcap_{j<i} S^{\mathsf{c}}_j$. Hence $x\in T_i$, as required.

\item[(ii)] Regarding $T_i\subseteq S_i$: Let $x\in T_i$. Then, by construction of $\preccurlyeq_{\STQ}$, we have $x\in\bigcup_{k\in I}\min(\preccurlyeq_k, \bigcap_{j<i} T^{\mathsf{c}}_j)$. Assume for reductio that $x\notin S_i$. We know that $x\in \bigcap_{j<i} T^{\mathsf{c}}_j$, so by the inductive hypothesis, $x\in \bigcap_{j<i} S^{\mathsf{c}}_j$. From this and $x\notin S_i$ we know that there exists $y\in S_i$, such that   $y\prec_{\oplus} x$.  Then from \PPAR, for all $k \in I$ there exists $z_k\in S_i$  such that  $z_k\prec_{k} x$. But this contradicts $x\in\bigcup_{k\in I}\min(\preccurlyeq_k, \bigcap_{j<i} T^{\mathsf{c}}_j)$. Hence $x\in S_i$, as required. 

\end{itemize}
\end{pproof}

\vspace{1em}


\PARSB*

\begin{pproof} 
\begin{itemize}

\item[(i)] From \PPAR~to \SB: Assume that $x\prec_\oplus y $ for all $x \in S^c$, $y \in S$. 
We must show that  $\bigcup_{i\in I}\min( \preccurlyeq_{i},S) \subseteq\min( \preccurlyeq_{\oplus},S)$. So assume $x\in \bigcup_{i\in I}\min( \preccurlyeq_{i},S) $ but, for contradiction, $x\notin  \min( \preccurlyeq_{\oplus},S)$. Then $y\prec_\oplus x$, for some $y\in S$. From the latter, by \PPAR, we know that, for each $i \in I$ there exists $z_i$ such that    $y \sim_{\oplus} z_i$ and  $z_i \prec_i x$.  Given our initial assumption, since $y\in S$, we can deduce from $y \sim_{\oplus} z_i$, for all $i\in I$, that $z_i\in  S $, for all $i\in I$. But this, together with $z_i \prec_i x$,  for all $i\in I$,  contradicts  $x\in \bigcup_{i\in I}\min( \preccurlyeq_{i},S) $. Hence $x\in  \min( \preccurlyeq_{\oplus},S)$, as required.

\item[(ii)] From \SB~to \PPAR: Suppose \PPAR~does not hold,  i.e.~$\exists x,y$ such that $x\prec_{\oplus} y$  but for some $i \in I$ there does not exist $z$ such that $x \sim_{\oplus} z$ and  $z \prec_i y$.We will show that \SB~fails, i.e.~ that $\exists S\subseteq W$, such that $x\prec_{\oplus} y$  for all $x \in S^c$, $y \in S$, but  $\bigcup_{i\in I}\min( \preccurlyeq_{i},S)\nsubseteq\min( \preccurlyeq_{\oplus},S)$. Let $S=\{w\mid x\preccurlyeq_{\oplus} w\}$ (so that $S^c=\{w\mid w\prec_{\oplus} x\}$). Clearly $x\in  S $ and, from $x\prec_{\oplus} y$, we know that $y\in S $ but $y\notin\min(\preccurlyeq_{\oplus}, S )$. Hence, to show $\bigcup_{i\in I}\min( \preccurlyeq_{i},S)\nsubseteq\min( \preccurlyeq_{\oplus},S)$ and therefore that \SB~fails, it suffices to show $y\in\min(\preccurlyeq_{i}, S )$. But if $y\notin\min(\preccurlyeq_{i}, S )$, then  $z\prec_{i} y$ for some $z\in S $, i.e. some $z$, such that  $x\preccurlyeq_{\oplus} z$. Since $\preccurlyeq_{\oplus}$ is a TPO we may assume $x\sim_{\oplus} z$. This contradicts our initial assumption that for no $z$ do we have $x\sim_{\oplus} z$ and $z\prec_{i} y$. Hence $y\in\min(\preccurlyeq_{i}, S )$, as required.

\end{itemize}
\end{pproof}

\vspace{1em}


\SemBreakdown* 

\begin{pproof} From \FacPref~to  \SPUPlus: Assume that $x_1$ to $x_n$ are s.t.~$x_i \prec_i y$. Then, $x_1$ to $x_n$ are s.t.~$x_i \preccurlyeq_i y$. So, by part (i) of \FacPref, there  exists  $j\in I$ s.t.,  if  $x_j \prec_j y$, then  $x_j \prec_\oplus y$ and therefore  there exists $j\in I$ s.t.  $x_j \prec_\oplus y$, as required.

From \FacPref~to  \WPUPlus: Assume that $x_1$ to $x_n$ are s.t.~$x_i \preccurlyeq_i y$. Then, by part (ii) of \FacPref, there  exists  $j\in I$ s.t.,  if  $x_j \preccurlyeq_j y$, then  $x_j \preccurlyeq_\oplus y$ and therefore  there exists $j\in I$ s.t.  $x_j \preccurlyeq_\oplus y$, as required.   
\end{pproof}

\vspace{1em}


\SWPUSynt*

\begin{pproof}
Regarding the equivalence between \SPUPlus~and \UBO:
\begin{itemize}

\item[(i)] From \SPUPlus~to \UBO: Assume that $y\notin \bigcup_{i\in I} \min(\preccurlyeq_i, S)$. We need to show that $y\notin \min(\preccurlyeq_\oplus, S)$. If $y\notin S$, then we are done. So assume $y\in S$. From  $y\notin \bigcup_{i\in I} \min(\preccurlyeq_i, S)$, for all $i\in I$, there exists $x_i\in S$ such that $x_i\prec_i y$. By  \SPUPlus, we have $x_i\prec_\oplus y$ for some $j$. Hence $y\notin \min(\preccurlyeq_\oplus, S)$, as required.

\item[(ii)] From \UBO~to \SPUPlus: Suppose that, for all $i\in I$, there exists $x_i\in S$ such that $x_i\prec_i y$. let $S=\{y\}\cup\{x_i\mid i\in I\}$. Since $x_i\prec_i y$ for all $i\in I$, we know  that $y\notin \bigcup_{i\in I} \min(\preccurlyeq_i, S)$. Then, by \UBO, $y\notin \min(\preccurlyeq_\oplus, S)$. Hence, there exists $j$ such that $x_j\prec_\oplus y$, as required.

\end{itemize}
Regarding the equivalence between \WPUPlus~and \LBO:
\begin{itemize}

\item[(i)] From \WPUPlus~to \LBO: Suppose \WPUPlus~holds and assume for contradiction that, for all $i\in I$, there exists $x_i\in \min(\preccurlyeq_i, S)$, such that $x_i\notin \min(\preccurlyeq_\oplus, S)$. Since $\preccurlyeq_\oplus$ is a TPO, this means that there exists $y\in S$, such that, for all $i\in I$, $y\prec_\oplus x_i$.Then, by \WPUPlus, there must exist $j$ such that $y\prec_j x_j$, contradicting $x_j\in \min(\preccurlyeq_j, S)$.

\item[(ii)] From \LBO~to \WPUPlus: Suppose $x_1,\ldots, x_n$ are such that $x_i\preccurlyeq_i y$. If $y= x_j$ for some $j$, then we are done, so assume that, for all $i\in I$, $y\neq x_j$. Assume for contradiction that, for all $i\in I$, $y\prec_\oplus x_i$. Let $S=\{y\}\cup\{x_i\mid i\in I\}$. Then $ \min(\preccurlyeq_\oplus, S) =\{y\}$
. By \LBO, we have $\min(\preccurlyeq_j, S)\subseteq \{y\}$, for some $j\in I$, so $y\prec_j x_j$. Contradiction.   
\end{itemize}
\end{pproof}

\vspace{1em}



\Flattest*

\begin{pproof}
Let profile $\mathbf{P} = (\preccurlyeq_1, \ldots, \preccurlyeq_n)$ be given. 
Let $\langle T_1,\ldots, T_m\rangle$ be the ordered partition corresponding to $\preccurlyeq_{\STQ}$. Let $\langle S_1,\ldots, S_n\rangle$ be the ordered partition corresponding to $\preccurlyeq_{\oplus}$. We must show that $\langle T_1,\ldots, T_m\rangle \sqsupseteq \langle S_1,\ldots, S_n\rangle$.

If $T_i=S_i$ for all $i$, then we are done. So let $i$ be minimal such that $T_i\neq S_i$. We must show $S_i\subset T_i$. So let $y\in S_i$ and assume, for contradiction, that  $y\notin T_i$. We know that $T_i\neq \emptyset$, since, otherwise, $\bigcup_{j < i} T_j = W$, hence $\bigcup_{j < i} S_j = W$ and so $S_i=\emptyset$, contradicting $S_i\neq T_i$. So let $x\in T_i$. Then $x \prec_{\STQ} y$. So, by \PPAR, for each $k$, $\exists z_k$ such that $z_k\sim_{\STQ} x$ (i.e.~$z_k\in T_i$) and $z_k\prec_k y$. Since $\oplus$ satisfies \SPUPlus, it follows that $z_s\prec_{\oplus} y$ for some $s$. But then, since $y\in S_i$, $z_s\in \bigcup_{j < i} S_j = \bigcup_{j < i} T_j$, contradicting $z_s\in T_i$. Hence $y\in T_i$ and so $S_i\subset T_i$, as required.   
\end{pproof}

\vspace{1em}


\CtoCPack*

\begin{proof} 
The proof is obtained by adapting the proof of Proposition 10 of \cite{DBLP:journals/ai/BoothC19} and adding a few small steps. Note that we only require $\oplus$ to satisfy the weak principles \SPU~and \WPU, not even  \SPUPlus~or   \WPUPlus, let alone \PPAR. Assume that we have $S=\{A_1,\ldots,A_n\}$ and that \CConR{1}--\CConR{4} are satisfied:
\begin{itemize}

\item[ (a)]  \CConRn{1}: Assume that $x,y \in\mods{\bigwedge \neg S}$. We must show that $x \preccurlyeq_{\Psi \odiv S}   y$ iff  $x \preccurlyeq_\Psi y$. Note first that, from \CConR{1}, it follows that (1) $x\preccurlyeq_{\Psi\contract A_i} y$ iff $x\preccurlyeq_{\Psi} y$, for all $i$. Regarding the left-to-right direction of the equivalence: Assume (2) $y\prec_\Psi x$. We want to show $y\prec_{\Psi \odiv S}   x$.  From (1) and (2), we recover (3) $y\prec_{\Psi\contract A_i} x$ for all $i$. From (3), by \SPU, it follows that $y \prec_{\Psi \odiv S}   x$, as required. Regarding the right-to-left-direction: Assume (4) $x\preccurlyeq_{\Psi} y$. We want to show $x\preccurlyeq_{\Psi \odiv S}   y$. From (1) and (4), we recover (5) $x\preccurlyeq_{\Psi\contract A_i} y$, for all $i$. From (4) and (5), by \WPU, it follows that $x\preccurlyeq_{\Psi \odiv S}   y$, as required. 

\item[(b)] \CConRn{2}: Similar proof to the one given in (a).

\item[(c)] \CConRn{3}:  Let $x \in\mods{\bigwedge \neg S}$, $y \notin\mods{\bigwedge \neg S}$  and $x \prec y$. We must show that $x \prec_{\Psi \odiv S}   y$. For all $i$, either (i) $x, y\in\mods{\neg A_i}$ or (ii) $x\in\mods{\neg A_i}$ and $y\in\mods{A_i}$. Either way, we recover $x\prec_{\Psi \contract A_i} y$, for all $i$: in case (i), by  \CConR{1}, and in case (ii), by \CConR{3}. From this, by \SPU, we then obtain $x\prec_{\Psi \odiv S}    y$, as required.

\item[(d)]  \CConRn{4}: Similar proof to the one given in (c), using \CConR{4} rather than \CConR{3} and   \WPU ~rather than \SPU.    

\end{itemize}
\end{proof}

\vspace{1em}


\end{document}